\documentclass{article}

\usepackage{microtype}
\usepackage{graphicx}
\usepackage{subcaption}
\usepackage{booktabs} 

\usepackage{hyperref}

\usepackage[accepted]{icml2026}

\usepackage{amsmath}
\usepackage{amssymb}
\usepackage{mathtools}
\usepackage{amsthm}

\usepackage{graphicx}
\usepackage{subcaption}
\usepackage{amsmath}
\usepackage{footmisc}
\usepackage{bm}
\usepackage{dsfont}
\usepackage{amssymb}
\usepackage{amsthm}
\usepackage{mathtools}
\usepackage{algorithm}
\usepackage{algorithmic}
\usepackage{booktabs}
\usepackage{siunitx}
\usepackage{tabularx} 

\usepackage{multirow}
\usepackage{listings} 
\usepackage{wrapfig}      
\usepackage{stfloats}

\usepackage{tcolorbox}
\usepackage{xltabular} 

\usepackage[capitalize,noabbrev]{cleveref}

\theoremstyle{plain}
\newtheorem{theorem}{Theorem}[section]

\theoremstyle{definition}

\theoremstyle{remark}

\usepackage[textsize=tiny]{todonotes}

\icmltitlerunning{A Unified Approach to Interpreting Knowledge Distillation for Large Language Models via Interactions}

\begin{document}

\twocolumn[
  \icmltitle{A Unified Approach to Interpreting Knowledge Distillation for Large Language Models via Interactions}


  \icmlsetsymbol{equal}{*}

  \begin{icmlauthorlist}
    \icmlauthor{Qingzhuo Wang}{equal,yyy}
    \icmlauthor{Ruiyang Qin}{equal,yyy}
    \icmlauthor{Zhenxin Qin}{yyy}
    \icmlauthor{Wen Shen}{yyy}
    \icmlauthor{Zhihua Wei}{yyy}
  \end{icmlauthorlist}

  \icmlaffiliation{yyy}{School of Computer Science and Technology, Tongji University, Shanghai, China}

  \icmlcorrespondingauthor{Wen Shen}{wenshen@tongji.edu.cn}
  \icmlcorrespondingauthor{Zhihua Wei}{zhihua\_wei@tongji.edu.cn}


  \vskip 0.3in
]



\printAffiliationsAndNotice{\icmlEqualContribution} 

\begin{abstract}
Despite the success of knowledge distillation (KD) in Large Language Models (LLMs), the underlying mechanism behind its efficacy remains unclear.
In this paper, we propose a unified approach to explore the common mechanism of various KD methods using interactions. Specifically, we decompose the output score of the LLM into the sum of numerous interactions. Each interaction represents a nonlinear relationship involving a set of input variables (\textit{e.g.}, words).
Based on the decomposed interactions, we discover that the common mechanism underlying various KD methods is the sparsification of interactions, \textit{i.e.}, student models retain fewer interactions for inference while suppressing other interactions to zero effects.
Furthermore, we discover that the performance variance across different KD methods arises from their capabilities in handling complex interactions.
A KD method typically yields better performance if it enables the student model to achieve higher sparsity of complex interactions.
Motivated by these insights, we propose a plug-and-play loss function called Complex Interaction Penalty (CIP) to explicitly enforce the sparsity of complex interactions during the distillation process. Extensive experiments demonstrate that integrating CIP consistently improves the performance of diverse KD methods on both in-domain and out-of-distribution benchmarks.

\end{abstract}

\section{Introduction}
\label{sec:intro}
Although LLMs excel in text generation tasks, their practical deployment is constrained by substantial computing expense and high-quality training data. Therefore, KD \cite{hinton2015distilling} serves as an essential technique to compress LLMs into compact student models while retaining their capabilities.
Despite the empirical success of various KD methods \cite{kim2016sequence,lin2020autoregressive,agarwal2024policy,guminillm,ko2024distillm,ko2025distillm}, the underlying mechanism of \textit{why} KD works remains unclear. Current research primarily focuses on improving the performance of KD, yet lacks interpretability of the distillation process: is there a unified mechanism that explains why knowledge distillation consistently enhances model performance?

\begin{figure*}[t]
\centering
\includegraphics[width=0.99\textwidth]{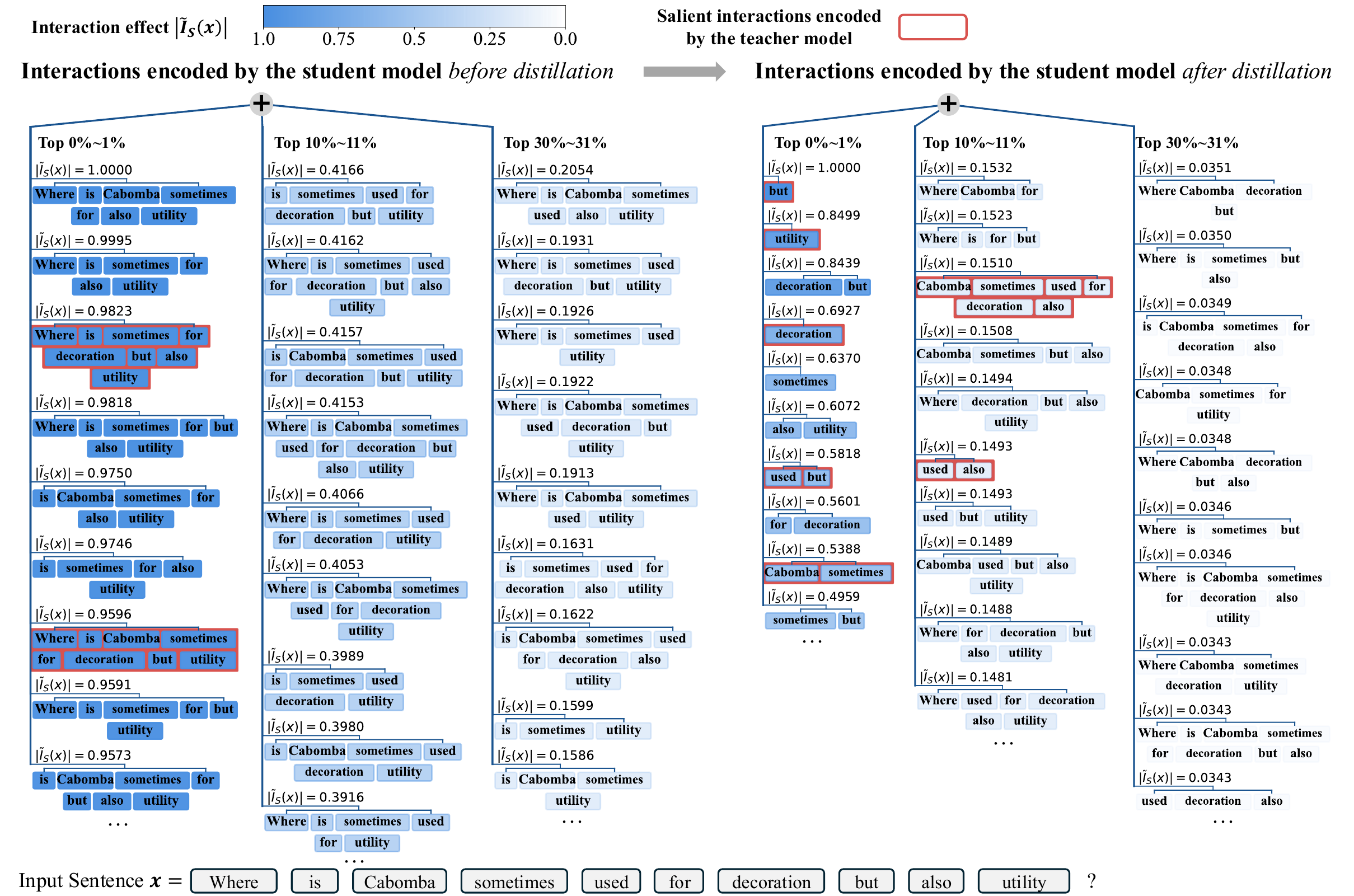}
\caption{Comparing the interaction patterns encoded by the student model before and after distillation. The distillation process \textbf{drives the sparsification of interactions} by suppressing most of the interactions to near-zero effects, which is visualized by the fading color of the bars. Meanwhile, the student model \textbf{improves the alignment of interactions with the teacher model} by learning more salient interactions from the teacher model (highlighted by red boxes).
}
\vspace{-0.2cm}
\label{figure:maingraph}
\end{figure*}

In this paper, we aim to explore the common mechanism behind the efficacy of various KD methods, thereby offering better guidance for the distillation process.
Recent research \citep{chen2024defining, zhou2024explaining, Ren_Zheng_Liu_Lizarraga_Wu_Lin_Zhang_2025, 10.5555/3737916.3739518} has leveraged game-theoretic interactions to explain the internal logic of LLMs. Inspired by these studies, we employ interactions to provide a unified interpretation of the learning process across various KD methods. 

Specifically, let {\small$x$} denote an input sequence with {\small$n$} input variables (\textit{e.g.}, words), indexed by the set {\small$N=\{1, \dots, n\}$}. An interaction represents a nonlinear relationship associated with a specific combination of input variables. Considering the sentence {\small$x = \textit{``I\ am\ a\ green\ hand\ means\ that\ I\ am\ a",}$} the words \textit{``green''} and \textit{``hand''} have distinct semantic meanings; their co-occurrence in the set {\small$S = \{\textit{green},\textit{hand}\}$} triggers a special interaction effect {\small$I_S$}, which pushes the LLM's inference towards the ground truth answer \textit{``beginner."}
\citet{li2023does} rigorously proved that the scalar output {\small$v(x)$} of an LLM is always equivalent to the output of an interaction-based logical model {\small$\phi(x)=\sum\nolimits_{S\subseteq N} I_S$}. That is, {\small$v(x)=\phi(x)=\sum\nolimits_{S\subseteq N} I_S$}. Thus, the inference logic of an LLM can be equivalently explained by a set of interactions.

Based on interactions, we conduct a comprehensive analysis of various KD methods and uncover a unified mechanism similar to \textit{Occam's Razor}~\citep{blumer1987occam}: \textbf{the essence of distillation lies in the sparsification of interactions}.
This means that after distillation, the student model retains much fewer interactions with high effects (which we term \textit{salient interactions}) and suppresses more interactions to near-zero effects than the original model.
Crucially, we find that during this sparsification process, the student model does not retain random interaction patterns but rather retains salient interactions encoded by the teacher model.
As shown in Figure~\ref{figure:maingraph}, the student model after distillation retain more salient interactions learned from the teacher model than the model before distillation, while setting many other interactions to nearly zero effects.

Furthermore, we investigate the underlying reasons of the above mechanism from the perspective of interaction complexity. Specifically, the complexity of an interaction is defined by the number of input variables involved. A simple interaction indicates the relationship between few input variables, while a complex interaction represents the relationship between many input variables.
By decomposing the overall sparsity of interactions into the sparsity of simple interactions and the sparsity of complex interactions, we identify that \textbf{the sparsification of complex interactions is the primary driver of the overall sparsification of interactions}. In other words, a high proportion of complex interactions are suppressed, but a relatively small part of simple interactions are suppressed after KD.
In addition, for salient interactions learned from the teacher model, a higher proportion of simple interactions are retained than complex interactions.
We attribute this to the findings in ~\citet{zhou2024explaining}: simple interactions typically encode generalizable knowledge essential for maintaining the model's general capabilities. In contrast, complex interactions are more likely to be considered as non-generalizable noises.

Based on the above insights, we explore why the performances of different KD methods vary. Empirically, we find that a KD method achieves better performance if it induces higher sparsity in complex interactions (\textit{i.e.}, suppressing more complex interactions to nearly zero effects) and when it retains more salient complex interactions encoded by the teacher model. Successful KD methods act as a selective filter: they effectively suppress non-salient complex interactions which are mainly considered as noise by the teacher model while faithfully preserving salient complex interactions learned from the teacher model.

Based on the common mechanism we identified, we propose a plug-and-play loss term called \textbf{Complex Interaction Penalty (CIP)}. This penalty term explicitly enforces the sparsity of complex interactions during the training process. By integrating CIP into current KD methods, we demonstrate that it enhances the performance of various KD methods on both in-domain and out-of-distribution benchmarks.

\section{Related Work}

\textbf{Knowledge distillation.}
KD \cite{hinton2015distilling} and SeqKD \cite{kim2016sequence} establish the foundation for model compression.
To address the distribution mismatch, methods like ImitKD \cite{lin2020autoregressive}, GKD \cite{agarwal2024policy}, and MiniLLM \cite{guminillm} utilize student-generated outputs for optimization.
Recent advances focus on efficiency and performance bounds: DistiLLM \cite{ko2024distillm} improves efficiency via Skew KL, DistiLLM-2 \cite{ko2025distillm} integrates contrastive learning, and Speculative KD \cite{xuspeculative} bridges the teacher-student gap through interleaved sampling strategies
(More KD methods are introduced in Appendix~\ref{app:add_related}).
Regarding explanation for the mechanism of KD, existing studies fall into theoretical verification and representation analysis.
Theoretical verifications \cite{phuong2019towards, allen2020towards} prove that KD accelerates convergence and transfers implicit ``dark knowledge'' inaccessible to single model.
Representation analyses \cite{cheng2020explaining, ojha2023knowledge, wu2024mechanisms} observe that student models inherit conceptual features and causal mechanisms from teacher models.
However, a unified framework is still missing to explicitly explain
\textit{the common mechanism of why various KD methods can improve model performance}.

\textbf{Using game-theoretic interactions to explain LLMs.} Conventional explanation methods \cite{tenney2020language, zhang2020extraction} suffer from a lack of mathematical faithfulness, implying their outputs may not accurately represent the internal logic encoded by the LLM. To bridge this gap, \citet{ren2023defining} proposed to use interactions between input variables to explain LLMs and provided a series of theoretical guarantees for the method's validity. Furthermore, \citet{li2023does} provided empirical evidence that interactions indeed represent meaningful concepts encoded by the LLM. \citet{10.5555/3666122.3667912} discovered that simple interactions are more likely to be learned by LLMs than complex interactions.
At the application level, the interaction framework has demonstrated its efficacy across a wide range of complicated tasks, such as adversarial transferability \cite{deng2024unifying}, model generalization \cite{chen2024defining, zhou2024explaining}, model training process \cite{Ren_Zheng_Liu_Lizarraga_Wu_Lin_Zhang_2025, 10.5555/3737916.3739518} and overfitting \cite{zhou2023concept, ren2023bayesian}. In this paper, we use the interaction framework to analyze the process of knowledge distillation.

\section{Preliminaries: Interactions}{\label{section:3}}
Given an LLM {\small$v$} and an input sentence {\small$x$} with {\small$n$} words\footnote{\label{footnote3}We take words instead of tokens as input variables because different LLMs may divide the same word into different tokens. For example, GPT-2-0.1B tokenizes the word ``Elements" into two tokens ``Element'' and  ``s'', while LLaMA-7B treats it as one token.} indexed by $N=\{1,2,\ldots,n\}$, let {\small$v(x) \in \mathbb{R}$} denote the scalar output of the LLM.
In practice, the scalar output {\small$v(x)$} is defined as {\small$v(x) = \text{log} \frac{\bar{p}}{1 - \bar{p}} \in \mathbb{R}$}, where {\small$\bar{p}$} denotes the probability of generating the ground-truth sequence in the text generation task.
Specifically, given the ground truth sequence {\small$y^* = (y^*_1, \ldots, y^*_L)$}, we define {\small$\bar{p}$} as the geometric mean of the probabilities of each ground truth token in {\small$y^*$}, \textit{i.e.}, {\small$\bar{p} = \left( \prod_{l=1}^{L} p(y_l^{\text{*}} \vert x, y_{<l}^{\text{*}}) \right)^{1/L}$}. Please see Appendix~\ref{app:output_metric} for why we choose this metric.
Recent studies \cite{li2023does, ren2023defining} proved Theorem \ref{theorem:universal-matching}, which shows that the output score {\small$v(x)$} on any randomly masked\footnote{We employ standard masking technique, such as replacing the target token with a specific \small{\small$\text{[MASK]}$} token. Please see Appendix~\ref{app:masking_strategy} for an introduction to masking strategies.} input {\small$x_T$} can be accurately calculated by the logical model {\small$\phi(\cdot)$}:
\begin{equation}\small
\phi (x_T) \triangleq \phi(x_{\emptyset}) 
+ \sum\nolimits_{S\subseteq N, S \neq \emptyset} \mathds{1}({S\mid x_T})\cdot I_S,
\label{eq:surrogate-model}
\end{equation}
where {\small$x_T$} denotes the input when we mask all the words in {\small$N \setminus T$} and keep all the words in {\small$T$} unchanged. The term {\small$\mathds{1}({S\mid x_T}) \in \{0,1\}$} serves as an indicator function for an \textbf{AND interaction pattern}, which represents an \textbf{AND relationship} between input variables in {\small$S$}. 
The scalar weight {\small$I_{S}$} represents the \textbf{interaction effect}, quantifying the effect of an AND relationship.
An AND relationship is activated only if every input variable within the set {\small$S$} is present (unmasked) in the input {\small$x_T$}. Taking the sentence {\small$x = \textit{``I\ am\ a\ green\ hand\ means\ that\ I\ am\ a"}$} as example, the co-occurrence of the input variables in the set {\small$S = \{\textit{green},\textit{hand}\}$} pushes the surrogate logical model's inference towards generating the ground truth answer {\small$\textit{``beginner"}$} by adding a numerical effect {\small$I_{S}$} to it. If an AND interaction {\small$S$} is activated, \textit{i.e.}, {\small$\mathds{1}({S\mid x_T}) = 1$}, the interaction effect {\small$I_{S}$} is added to the output of the logical model. Otherwise, if any word in {\small$S$} is masked and the AND interaction is not activated, \textit{i.e.}, {\small$\mathds{1}({S\mid x_T}) = 0$}, the interaction effect {\small$I_{S}$} is not added to the output of the logical model.

\begin{theorem}[Universal matching property, proven in Appendix~\ref{app:universal_proof}]
\label{theorem:universal-matching}
Consider an input {\small$x$} with {\small$n$} input variables. Let {\small$\{x_T \mid T\subseteq N\}$} denote the set of all {\small$2^n$} different masked inputs.
For every masked input {\small$x_T$}, when the scalar weight {\small$I_{S}$} in the logical model {\small$\phi(\cdot)$} are set to {\small$I_{S} = \sum\nolimits_{S' \subseteq S} (-1)^{\vert S \vert - \vert S' \vert} \cdot v(x_{S'}), \phi(x_{\emptyset})=v(x_\emptyset)$}, the output of the logical model {\small$\phi(\cdot)$} can always match the LLM's output score {\small$v(\cdot)$}.
\begin{equation}\small
\forall\ T\subseteq N,\ \ v(x_T)=\phi (x_T)
\end{equation}
\end{theorem}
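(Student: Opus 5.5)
The plan is to substitute the given definition of $I_S$ (the Harsanyi dividend, i.e.\ the Möbius transform of $v$ on the Boolean lattice) into the logical model and collapse the resulting double sum by Möbius inversion. Fix $T\subseteq N$. Since $\mathds{1}(S\mid x_T)=1$ exactly when $S\subseteq T$, the model reads $\phi(x_T)=v(x_\emptyset)+\sum_{\emptyset\neq S\subseteq T} I_S$. With the stated formula, $I_\emptyset=\sum_{S'\subseteq\emptyset}(-1)^{0}v(x_{S'})=v(x_\emptyset)$, so the constant term is just the $S=\emptyset$ summand and
\begin{equation}\small
\phi(x_T)=\sum\nolimits_{S\subseteq T} I_S=\sum\nolimits_{S\subseteq T}\ \sum\nolimits_{S'\subseteq S}(-1)^{|S|-|S'|}\,v(x_{S'}).
\end{equation}

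Next, I will exchange the order of summation, indexing by the inner set $S'$: every pair with $S'\subseteq S\subseteq T$ is counted once, so
\begin{equation}\small
\phi(x_T)=\sum\nolimits_{S'\subseteq T} v(x_{S'})\sum\nolimits_{S:\,S'\subseteq S\subseteq T}(-1)^{|S|-|S'|}.
\end{equation}
Writing $S=S'\cup R$ with $R\subseteq T\setminus S'$, the inner coefficient becomes $\sum_{R\subseteq T\setminus S'}(-1)^{|R|}=\sum_{k=0}^{m}\binom{m}{k}(-1)^k=(1-1)^m$, where $m=|T\setminus S'|$. This equals $1$ if $m=0$ (with the convention $0^0=1$) and $0$ otherwise.

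Hence only $S'=T$ survives and $\phi(x_T)=v(x_T)$ for every $T\subseteq N$. There is no genuine obstacle. The only places requiring care are confirming that the separately defined constant $\phi(x_\emptyset)=v(x_\emptyset)$ coincides with $I_\emptyset$, so that the empty set can be folded into the sum without double counting, and handling the $m=0$ case of the binomial identity. As a sanity check, the case $T=\emptyset$ reduces trivially to $\phi(x_\emptyset)=v(x_\emptyset)$.
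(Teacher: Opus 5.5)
Your proof is correct and follows essentially the same route as the paper's appendix proof. You substitute the Harsanyi formula, absorb the constant $v(x_\emptyset)$ as the $S=\emptyset$ term (the paper instead adds and subtracts it), swap the order of summation over $S'\subseteq S\subseteq T$, and eliminate every $S'\neq T$ term via $\sum_{k}\binom{m}{k}(-1)^k=0$ for $m\geq 1$.
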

Equation~(\ref{eq:surrogate-model}) and Theorem~\ref{theorem:universal-matching} establish a rigorous theoretical foundation, allowing us to treat interactions as the detailed inference patterns that constitute the model's internal logic.

\section{Explaining KD via Interactions}\label{section:4}
\subsection{Extracting Interactions in LLMs}\label{section:4.0}
In this paper, we use interactions as an analytical tool to investigate how interaction patterns of the student model change before and after distillation, thereby analyzing the underlying mechanism of the distillation process.

\textit{Experiment setup}. We conduct experiments on six widely used KD methods, including KD \cite{hinton2015distilling}, SeqKD \cite{kim2016sequence}, ImitKD \cite{lin2020autoregressive}, MiniLLM \cite{guminillm}, GKD \cite{agarwal2024policy}, and DISTILLM \cite{ko2024distillm}. For comparison, we employ the original pre-trained student model without any further training as the \textbf{Base} model and the student model trained via standard supervised fine-tuning as the \textbf{SFT} model.
We analyze three LLM families with the following teacher-student configurations: 
(1) GPT-2 family~\cite{gpt2}, we use GPT-2-1.5B as the teacher model and select GPT-2-0.1B, 0.3B, and 0.7B as student models; 
(2) OPT family~\cite{opt}, we use OPT-13B as the teacher model and select OPT-1.3B, 2.7B, and 6.7B as student models; 
and (3) LLaMA family~\cite{llama}, we use LLaMA-13B as the teacher model and select LLaMA-7B as the student model.
All student models are distilled on the \texttt{databricks-dolly-15k} dataset \cite{dolly}. We follow \citet{ko2024distillm} to split the dataset into training, validation and test set. We evaluate model performance using the ROUGE-L metric \cite{lin2004rouge} and compute interactions on the test set.
For computational feasibility, we take words in the \textit{``Instruction''} segment of the input as input variables.
Meanwhile, the system prompt is treated as fixed, unmasked background context.
Please see Appendix~\ref{app:analy_detail} for the details of implementation.

Figure~\ref{figure:sample_sparsity} plots the probability density of absolute values of the normalized interaction effects for a representative input sample {\small$x$}. Specifically, the normalized interaction effect is calculated as {\small$\widetilde{I}_S=\frac{I_S}{\textit{Max}}$}, where {\small$\textit{Max}$} is the maximum absolute values of all {\small$2^{n}-1$} interaction effects of {\small$x$}.
Results show that the non-distilled model exhibits a relatively flat and uniform distribution of interactions. In contrast, the distributions of distilled models exhibit an obvious peak near zero, indicating that the majority of interactions are suppressed to nearly zero effects.
It demonstrates that the distribution of interactions turns more sparse after the distillation process.
Notably, we observe that SFT models do not exhibit similar level of sparsification as distilled models. We hypothesize that \textbf{sparsification of interactions is a unique mechanism of KD, distinguishing it from standard training methods}.

\begin{figure}[t]
\centering
\includegraphics[width=1\columnwidth]{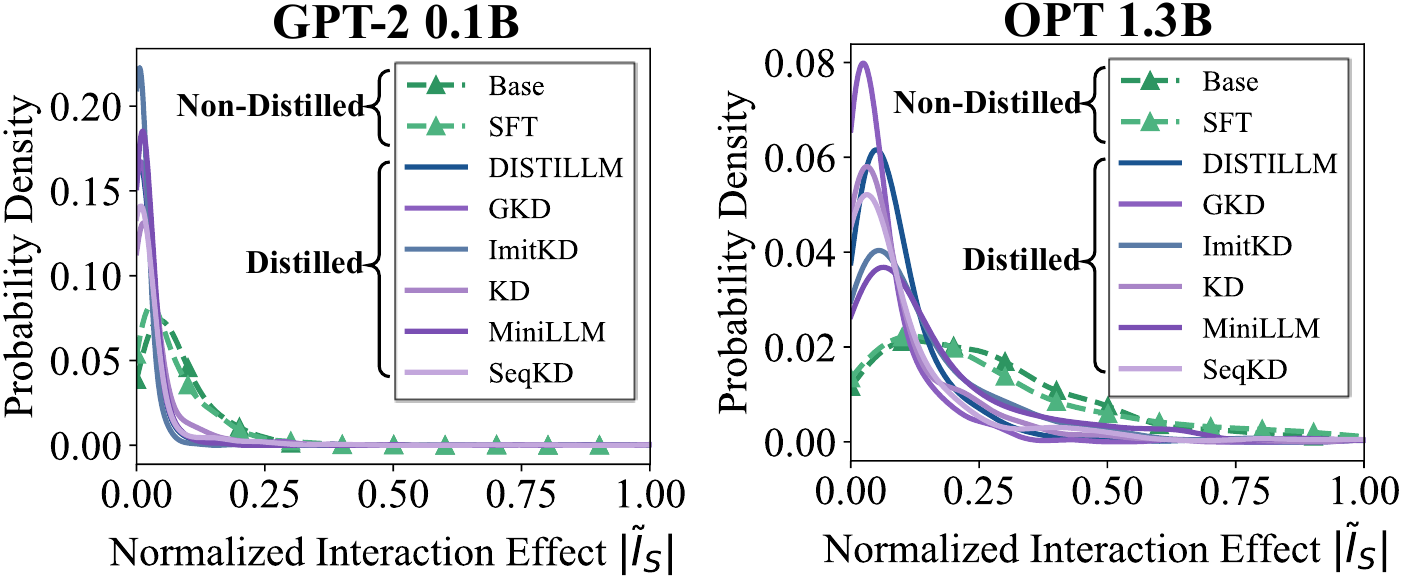}
\caption{Comparison of the probability density between distilled and non-distilled models. Non-distilled models exhibit a flat distribution, while distilled models exhibit sharp peaks near zero.}
\label{figure:sample_sparsity}
\vspace{-0.3cm}
\end{figure}

\subsection{Interaction-Based Metrics for Analyzing KD}\label{section:4.1}
Inspired by the phenomenon, in this subsection, we propose three interaction-based metrics to analyze how a KD method change the interactions encoded by the student model.

\textbf{Interaction sparsity.}
To evaluate the uniformity of the interaction distribution encoded by LLMs and thereby quantify the sparsity of interactions, we introduce the \textbf{Gini coefficient} and \textbf{Shannon entropy}.

\textit{Gini coefficient}.
Let {\small$\mathcal{I} = \{|\widetilde{I}_S| : S \subseteq N, S \neq \emptyset\}$} denote the set of absolute value of all normalized interaction effects extracted from the input {\small$x$}. Let {\small$M = |\mathcal{I}| = 2^n-1$} denote the number of all the interactions.
Let the values in {\small$\mathcal{I}$} be sorted in ascending order as {\small$u_1 \le u_2 \le \dots \le u_M$}. For the input {\small$x$}, the Gini coefficient {\small$G(x)$} is defined as:
\begin{equation}\small
    G(x) = \frac{2 \sum_{i=1}^M i u_i}{M \sum_{i=1}^M u_i} - \frac{M+1}{M}.
\end{equation}

\textit{Shannon entropy}.
The Shannon entropy {\small$H(x)$} (abbreviated as entropy) is defined as {\small$H(x) = -\sum_{S} p_S \log p_S$},
where {\small$p_S = |\widetilde{I}_S| / \sum_{S'} |\widetilde{I}_{S'}|$} is the normalized probability.
A high {\small$G(x)$} or low entropy {\small$H(x)$} indicates a sharp interaction distribution, reflecting high sparsity of interactions.

\textbf{Interaction alignment.}
To evaluate whether the student model inherits the critical interaction patterns of the teacher model during distillation, we employ the overlap rate of interactions to measure the alignment of the salient interactions between the student and teacher model.
We define a threshold ratio {\small$k \in (0, 1]$}. Let {\small$\Omega_{\text{teacher}}^{(k)}$} and {\small$\Omega_{\text{student}}^{(k)}$} represent the sets containing the indices of the top-{\small$\lfloor k \times M \rfloor $}interactions with the largest absolute effects in the teacher and student models, respectively. For the input {\small$x$}, the student-teacher overlap rate {\small$Overlap@k(x)$} is defined as:
\begin{equation}\small
    Overlap@k(x) = \frac{\left| \Omega_{\text{teacher}}^{(k)} \cap \Omega_{\text{student}}^{(k)} \right|}{\lfloor k \times M \rfloor}.
\end{equation}
A high overlap rate indicates that the student model has effectively captured the teacher model's salient interactions, implying a strong logic alignment.

\subsection{Exploring the Common Mechanism of KD Methods}\label{section:4.2}
Based on the proposed metrics, we discover the following two common mechanisms shared by different KD methods.

\begin{figure*}[t]
\centering
\includegraphics[width=1\textwidth]{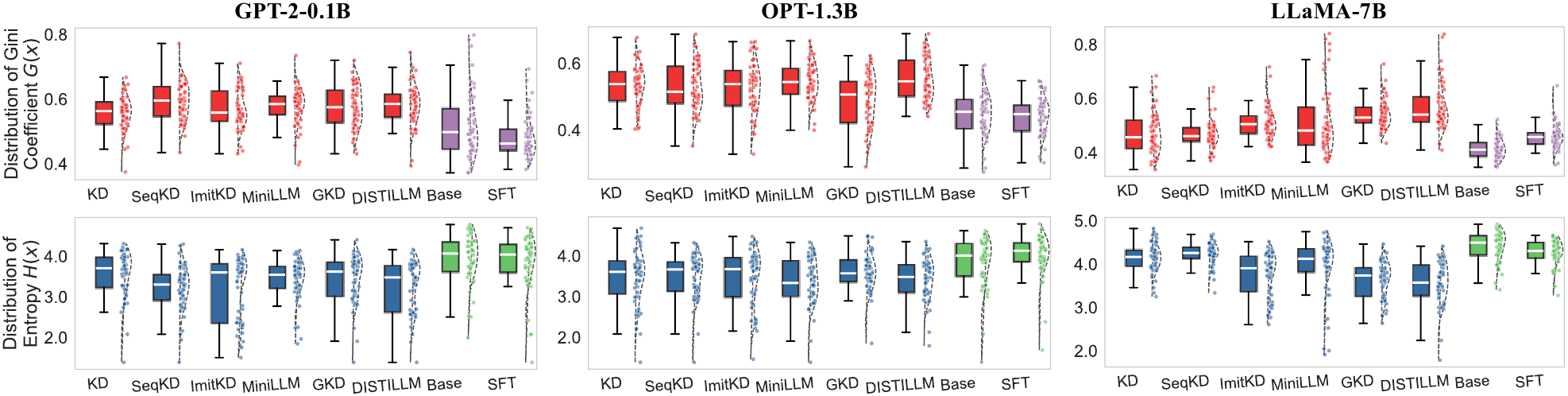}
\caption{The distribution of the Gini coefficient and entropy across all samples. The results demonstrate that, compared to non-distilled models (Base models and SFT models), distilled models consistently exhibit higher Gini coefficients and lower entropy. It indicates that the interactions encoded by the LLM become more sparse after distillation.}
\label{figure:sparsity_dataset}
\end{figure*}

\begin{table*}[!t]
\renewcommand{\arraystretch}{1} 
\centering
\small
\caption{Comparing the student-teacher overlap rate of salient interactions before and after distillation. {\small$\mathbb{E}_x \left[Overlap@k(x)\right]$} represents the average overlap rate across all samples.}
\begin{tabular}{ll c cccccc}
\toprule
\multirow{2}{*}{$\mathbb{E}_x \left[Overlap@k(x)\right]$} & \multirow{2}{*}{Model} & \multicolumn{1}{c}{non-distilled} & \multicolumn{6}{c}{distilled (with different KD methods)} \\
\cmidrule(lr){3-3} \cmidrule(lr){4-9}
 & & Base & KD & SeqKD & ImitKD & MiniLLM & GKD & DISTILLM \\
\midrule
\multirow{3}{*}{$k=0.1$} 
 & GPT-2-0.1B & 19.81\% & 24.84\% & 26.96\% & 23.05\% & 23.68\% & 26.69\% & 26.38\% \\
 & OPT-1.3B   & 21.96\% & 25.24\% & 26.48\% & 21.47\% & 26.54\% & 25.23\% & 25.71\% \\
 & LLaMA-7B   & 21.35\% & 24.18\% & 21.55\% & 22.02\% & 27.62\% & 29.43\% & 31.60\% \\
\bottomrule
\end{tabular}
\label{tab:overlap_macro}
\vspace{-0.2cm}
\end{table*}

\textbf{Distillation enhances interaction sparsity.}
We employ the Gini coefficient {\small$G(x)$} and Shannon entropy {\small$H(x)$} to quantify interaction sparsity across the whole dataset. Figure~\ref{figure:sparsity_dataset} illustrates the distributions of these metrics across all samples. 
Results\footnote{\label{footnote:more_results}Results on more model family and model size in Appendix~\ref{app:more_results_main} exhibit the same conclusion.} show that distilled student models demonstrate an obvious distributional shift towards higher Gini values and lower entropy compared to corresponding base models and SFT models. 
It proves that the distillation process enhances the sparsity of interactions encoded by the student model.

\textbf{Distillation enhances interaction alignment.}
The above phenomenon is counter-intuitive. Although KD suppresses most of the interactions to nearly zero effects and makes the student model rely on fewer salient interactions for inference, it conversely improves the performance of the student model.
To explain such a counter-intuitive phenomenon, we further analyze the relationship between the salient interactions retained by the distilled student model and those encoded by the teacher model by the metric {\small$Overlap@k(x)$}.

Table~\ref{tab:overlap_macro} shows that distilled student models exhibit a higher overlap rate with the teacher model in terms of salient interactions than that of the non-distilled student models\footref{footnote:more_results}. 
This indicates that KD drives student models to learn more salient interactions encoded by the teacher model, which accounts for the enhanced performance of the distilled model.

In conclusion, the sparsification of interactions demonstrates that the distillation process effectively compresses interaction patterns encoded by the student model.
Furthermore, the alignment of salient interactions between the student model and the teacher model confirms that the compression is achieved by selectively \textbf{learning the teacher model's salient interactions} while \textbf{discarding other interactions}.

\subsection{Exploring the Underlying Reasons behind the Common Mechanism}\label{section:4.3}
The mechanism of KD can be summarized as \textit{``discarding the dross and selecting the essential.''} To explore the underlying reason of the mechanism, we further identify which specific types of interactions are primarily treated as \textit{``dross''} to be discarded, and which specific types of interactions are treated as \textit{``essential''} to be selected.

To answer the above question, we analyze interactions with different complexities. The complexity of an interaction {\small$S$} is defined as the number of input variables involved, \textit{i.e.}, {\small$|S|$}.
Based on this, we partition interactions into two types: simple interactions, which represent relationships involving few input variables, and complex interactions, which represent relationships involving many input variables.
In this way, the overall sparsity of all interactions can be decomposed into the sparsity of simple interactions and the sparsity of complex interactions.
Following \citet{lqcvpr}, we partition interactions into two groups: {\small$\Omega^{\text{simple}}$} and {\small$\Omega^{\text{complex}}$}. Given an input sentence {\small$x$} with {\small$n$} words indexed by {\small$N=\{1, \dots, n\}$}, we define {\small$\Omega^{\text{simple}} = \{ S \subseteq N \mid 1 \le |S| < \beta \}, \Omega^{\text{complex}} = \{ S \subseteq N \mid \beta \le |S| \le n \} $},
where we set {\small$\beta = \lceil n/3 \rceil$}.
To verify the robustness of the choice of {\small$\beta$}, we also conduct experiments on {\small$\beta = \lceil n/5 \rceil$} and {\small$\beta = \lceil n/2 \rceil$}. Results in Appendix~\ref{app:more_results_3n} show that our findings are consistent when choosing different {\small$\beta$}.
Accordingly, we denote the metrics in Section~\ref{section:4.1} for interaction type {\small$\tau \in \{\text{simple}, \text{complex}\}$} and model type {\small$\mathcal{F} \in \{\text{base}, \text{distilled}\}$} as {\small$G^{\tau}_{\mathcal{F}}(x)$}, {\small$H^{\tau}_{\mathcal{F}}(x)$}, and {\small$\text{Overlap}@k^{\tau}_{\mathcal{F}}(x)$}, respectively.
{\small$G^{\tau}_{\mathcal{F}}(x)$} is computed by sorting all interaction effects in ascending order in the subset {\small$\Omega^{\tau}$}.
{\small$H^{\tau}_{\mathcal{F}}(x)$} is derived from a probability distribution \textit{re-normalized} within the subset {\small$\Omega^{\tau}$}, defined as {\small$p_S = |\widetilde{I}_S| / \sum_{S' \in \Omega^{\tau}} |\widetilde{I}_{S'}|$}.
{\small$\text{Overlap}@k^{\tau}_{\mathcal{F}}(x)$} is defined as the overlap rate of top-{\small$k$} interactions ranked \textit{exclusively} within the subset {\small$\Omega^{\tau}$}.

\begin{figure}[t]
\centering
\includegraphics[width=1\columnwidth]{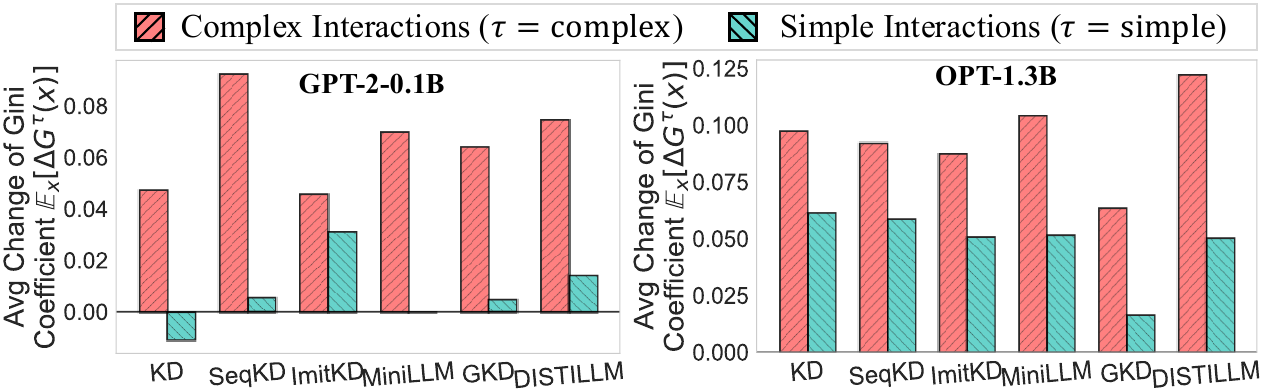}
\caption{Comparison of the changes in Gini coefficients between distilled and base models for simple versus complex interactions.}
\label{fig:order_sparsity_comparison}
\vspace{-0.1cm}
\end{figure}

\begin{figure}[t]
\centering
\includegraphics[width=1\columnwidth]{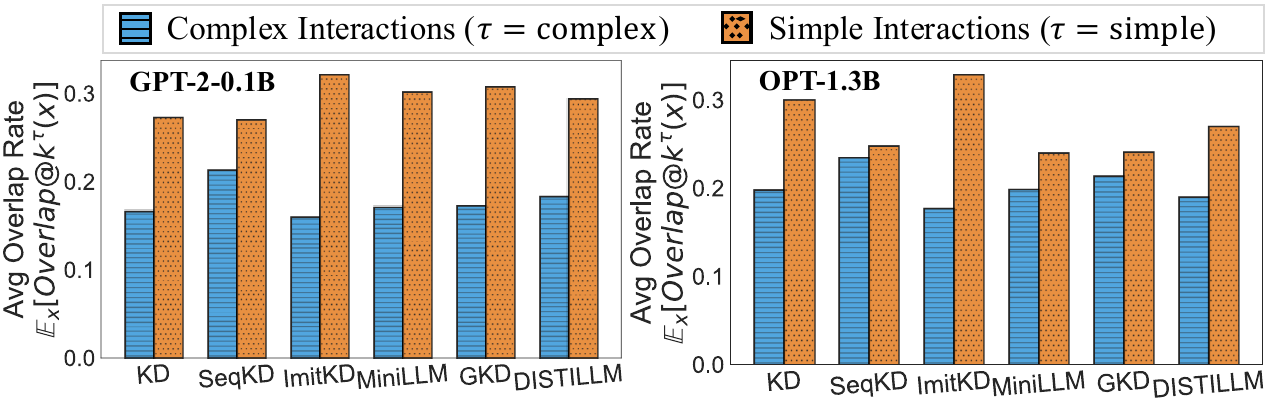}
\caption{Comparison of student-teacher overlap rate between simple interactions and complex interactions after distillation. Results show that the student-teacher overlap rate of simple interactions is higher compared to complex interactions after distillation.}
\vspace{-0.3cm}
\label{fig:order_overlap}
\end{figure}

\begin{figure}[t]
\centering
\includegraphics[width=1\columnwidth]{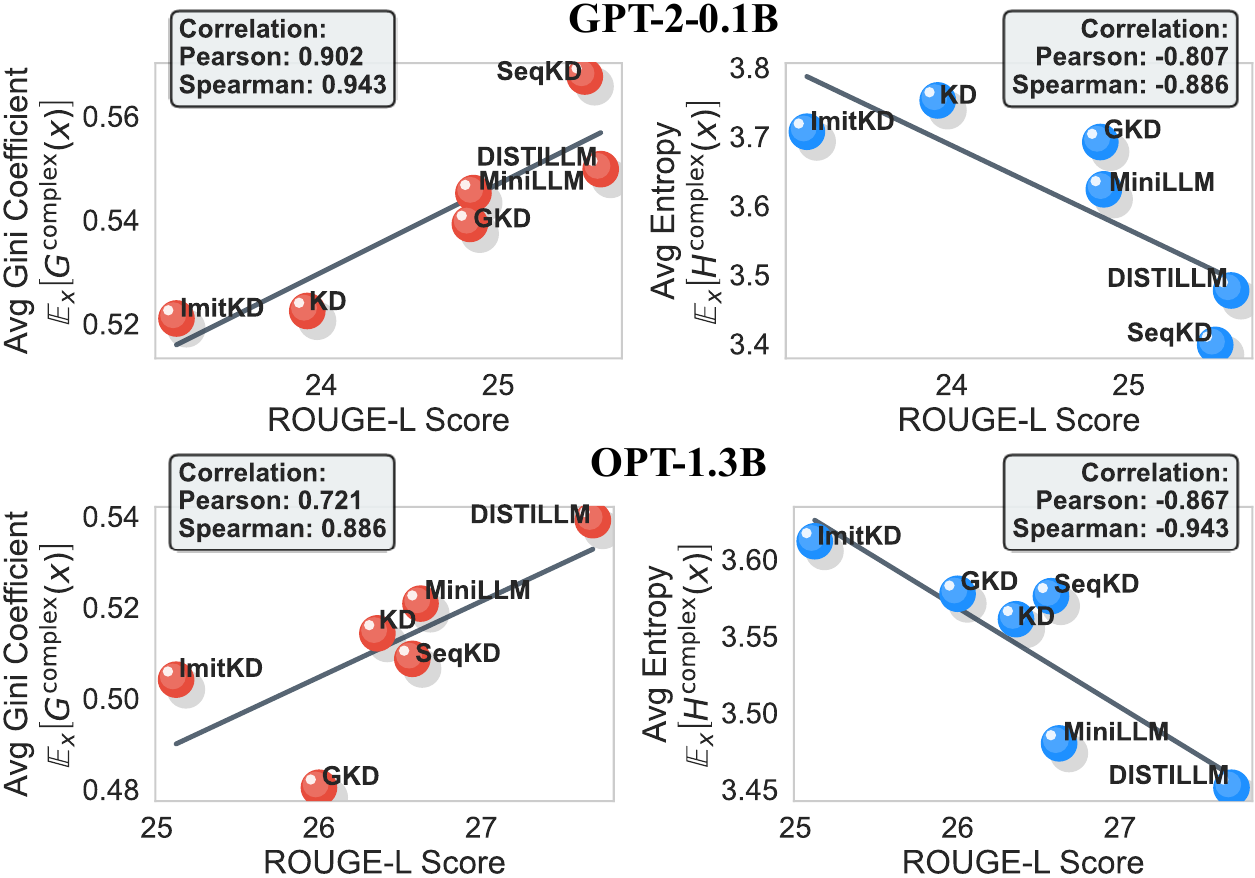}
\caption{Correlation between the sparsity of complex interactions and model performance. Results show that increased sparsity (higher Gini coefficients and lower entropy) of complex interactions is positively associated with higher ROUGE-L scores. However, there is no distinct correlation between the sparsity of simple interactions and model performance, as shown in \cref{figure:correlation_low_order_all}.}
\vspace{-0.2cm}
\label{fig:correlation_high_order}
\end{figure}

First, we aim to explore whether the sparsification of interactions is driven mainly by the sparsification of simple interactions or complex interactions.
Figure~\ref{fig:order_sparsity_comparison} plots the change of Gini coefficients between distilled and base models for both simple and complex interactions, which is defined as {\small$\Delta G^{\tau}(x)=G^{\tau}_{\text{distilled}}(x)-G^{\tau}_{\text{base}}(x)$}.
Results\footref{footnote:more_results} show that the sparsity of complex interactions exhibits an obvious increase after distillation. In contrast, the sparsity of simple interactions exhibits lower increases or even decrease in some cases.
This indicates that \textit{the sparsification of complex interactions contributes more to the overall sparsification of interactions after distillation}.
In other words, the interactions discarded by the LLM during the distillation process are more likely to be complex interactions.

Second, we investigate whether the salient interactions learned from the teacher model are mainly simple or complex interactions. 
Specifically, we compare the overlap rate of simple interactions against that of complex interactions between the teacher and student models.
Figure~\ref{fig:order_overlap} shows that the overlap rate of salient simple interactions consistently surpasses that of complex interactions\footref{footnote:more_results}. This indicates that \textit{the student model learn more simple salient interactions than complex salient interactions from the teacher model}.

Thus, we can conclude that \textbf{complex interactions are primarily treated as \textit{``dross"} to be discarded and simple interactions are primarily treated as \textit{``essential"} to be selected during the distillation process}.
We attribute this phenomenon to the findings of \citet{zhou2024explaining}, which suggest that simple interactions typically encode fundamental, generalizable knowledge, while most of the complex interactions are considered as non-generalizable noise. Thus, the underlying reason of the common mechanism is that \textbf{the distillation process makes the student model retain more simple interactions to stabilize the student model's general capabilities, while discarding more complex interactions to filter out non-generalizable noise}.

\begin{figure}[t]
\centering
\includegraphics[width=1\columnwidth]{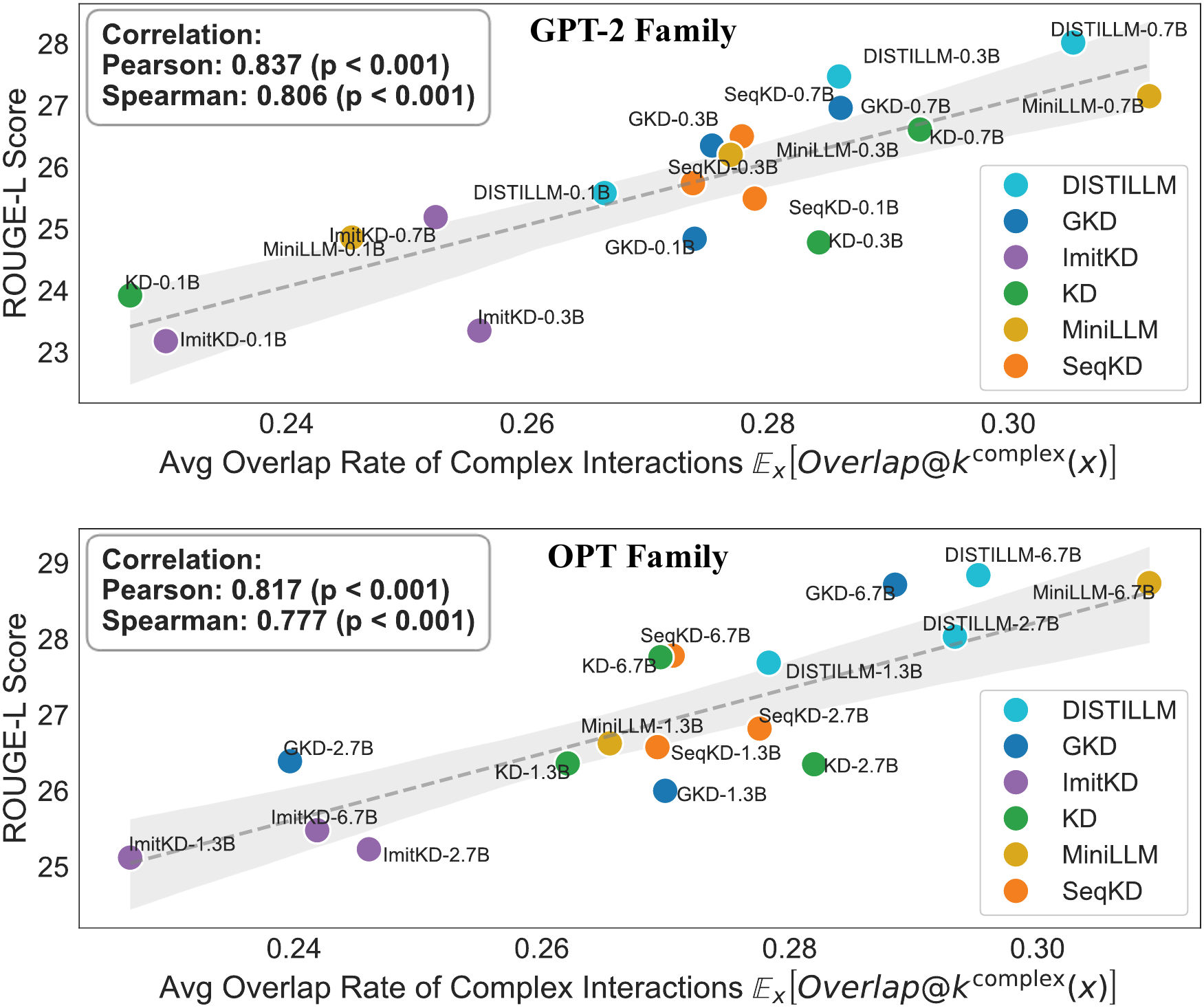}
\caption{Correlation between the student-teacher overlap rate of complex interactions and model performance. Results show that increased student-teacher overlap rate of complex interactions is positively associated with higher ROUGE-L scores.}
\vspace{-0.3cm}
\label{figure:overlap_high_order}
\end{figure}

\subsection{Explaining the Performance Variance across Different KD methods}\label{section:4.4}
We further explore \textit{why the performances of different KD methods differ} from the perspective of interactions.
Specifically, we discover the correlation between the sparsity of complex interactions and the model performance of student models across different KD methods. As shown in Figure~\ref{fig:correlation_high_order}, \textbf{the sparsity of complex interactions is positively correlated with model performance}\footref{footnote:more_results}. Student models that exhibit higher Gini coefficients and lower entropy (indicating higher sparsity) in their complex interactions generally achieve superior ROUGE-L scores.
Furthermore, we analyze the relationship between the student-teacher overlap rate of complex interactions and model performance.
As shown in Figure~\ref{figure:overlap_high_order}, the results\footref{footnote:more_results} reveal \textbf{a positive correlation between model performance and the overlap rate of complex interactions}. 
This indicates that student models with higher performance learn more complex interactions from the teacher model compared to student models with lower performance.
Counter-intuitively, the overlap rate of simple interactions shows no distinct correlation with model performance, as shown in \cref{figure:overlap_all_low} in Appendix~\ref{app:more_results_main}.
Across all the results, the Pearson correlation coefficient between simple interactions and model performance is 0.34 ($\pm$ 0.12), whereas for complex interactions, it is 0.81 ($\pm$ 0.11).
This implies that while retaining more simple interactions is a common reason for general performance improvement in all KD methods, it does not account for the performance gap between different KD methods. Instead, the retainment of complex interactions serves as the decisive factor.

In conclusion, the performance variance across KD methods is related to their abilities in handling complex interactions. In general, superior performance is achieved when student models exhibit higher sparsity and higher alignment (overlap rate) with the teacher model in complex interactions.

\section{Guiding KD via Interactions}
In \cref{section:4.4}, we find that the sparsity of complex interactions is positively correlated with model performance. The more complex interactions are suppressed to nearly zero effects, the better the model performance becomes.
Based on this insight, we introduce a plug-and-play loss function called \textbf{C}omplex \textbf{I}nteraction \textbf{P}enalty (\textbf{CIP}), which is designed to suppress complex interactions encoded by the student model during distillation. We define the loss function {\small$\mathcal{L}_{\text{CIP}}$} as follows.
\begin{equation}\label{eq:5}\small
    \mathcal{L}_{\text{CIP}} = \mathbb{E}_x \left[ \mathbb{E}_{S \in \Omega^{\text{complex}}} [|I(S)|] \right]
\end{equation}
\begin{figure*}[!t]
\centering
\includegraphics[width=1\textwidth]{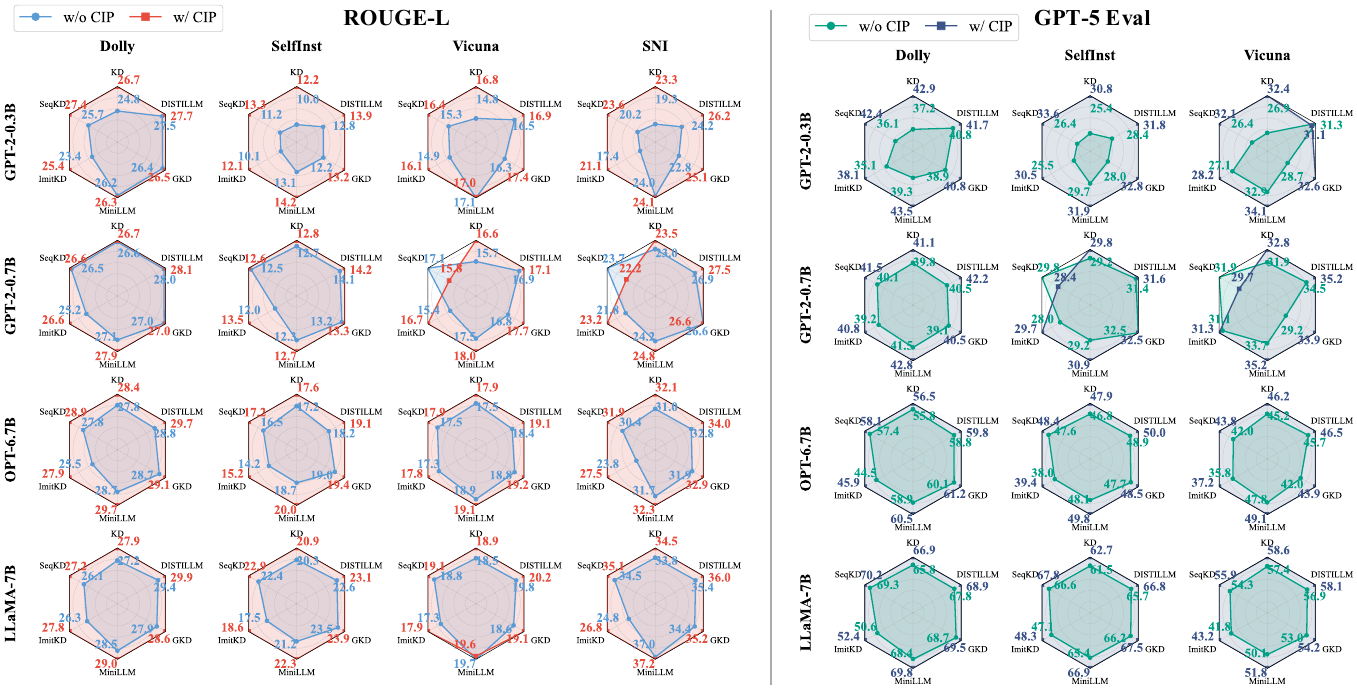}
\caption{The comparison of ROUGE-L and GPT-5 scores between different KD methods without and with the CIP loss function across various model families and model sizes. Results show that adding the CIP loss improves the model performance on almost all the KD methods and datasets. Please see \cref{tab:main_results} in Appendix~\ref{app:more_results} for results of all models with the CIP loss.}
\label{figure:main_result}
\vspace{-0.1cm}
\end{figure*}
According to \cref{eq:5}, directly computing all complex interactions is NP-hard. To alleviate computational complexity while preserving efficacy, we adopt a sampling method to approximate {\small$\mathcal{L}_{\text{CIP}}$} by following \citet{zqsdropout} and \citet{lqcvpr}.
Specifically, given an input sequence {\small$x$} with {\small$n$} words indexed by {\small$N = \{1, \dots, n\}$}, we randomly partition the set {\small$N$} into {\small$m$} disjoint subsets {\small$S_1, S_2, \dots, S_m$}, such that {\small$\bigcup_{i=1}^m S_i = N$} and {\small$S_i \cap S_j = \emptyset$}.
Based on this, we define the approximated loss term {\small$\mathcal{L}'_{\text{CIP}}$} as follows.
\begin{equation}\small
    \mathcal{L}_{\text{CIP}}' = \mathbb{E}_x \left[ \mathbb{E}_{K \subseteq \{1, \dots, m\}, K \neq \emptyset} \left[ \left| I\left(\bigcup_{i \in K} S_i\right) \right| \right] \right]
\end{equation}
To ensure that any sampled interaction {\small$\bigcup_{i \in K} S_i$} is a complex interaction, that is, {\small$|\bigcup_{i \in K} S_i| \ge \lceil n/3 \rceil$}, {\small$m$} must meet the requirement {\small$m \le \frac{n}{\lceil n/3 \rceil} \le 3$}. The detailed proof is in Appendix~\ref{app:sampling_proof}. Therefore, the approximation method effectively reduces the complexity from {\small$2^n$} to {\small$2^m$}, making the computational cost negligible compared to other steps in the distillation. See Appendix~\ref{app:time_comparison} for the comparison of training time with and without the CIP loss.

The total loss function for KD is defined as the weighted sum of the original distillation loss {\small$\mathcal{L}_{\text{KD}}$} and the CIP loss:
\begin{equation}\small
    \mathcal{L} = \mathcal{L}_{\text{KD}} + \lambda \mathcal{L}_{\text{CIP}}',
    \label{eq:kd_loss}
\end{equation} 
where {\small$\lambda$} is the weight of the CIP loss ({\small$\lambda>0$}).

\begin{figure*}[t]
\centering
\includegraphics[width=1\textwidth]{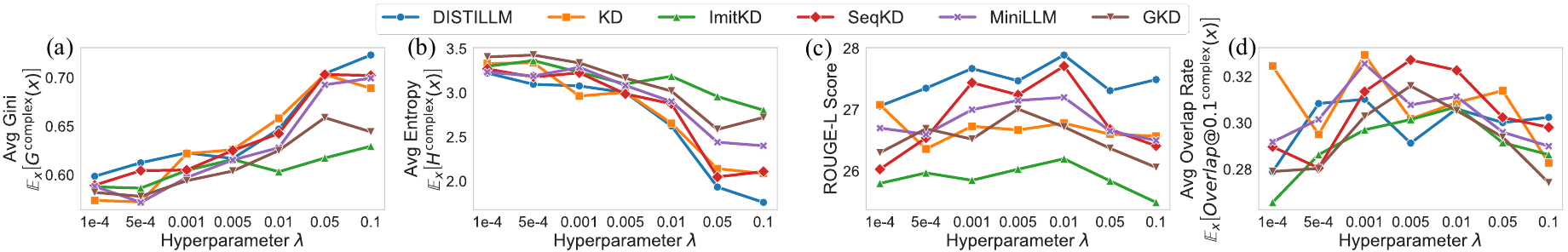}
\caption{Exploring the effects of the weight {\small$\lambda$} for on the sparsity of complex interactions (a and b), the performance of the student model (c), and the student-teacher overlap rate of complex interactions (d). Here the base model is GPT-2-0.3B.}
\vspace{-0.3cm}
\label{figure:hyperparameter}
\end{figure*}
We integrate the proposed CIP loss into six widely used KD methods and train on three LLM families, same as the experimental setup in \cref{section:4.1}. The LLMs are distilled on the \texttt{databricks-dolly-15k} dataset and evaluated across four diverse instruction-following benchmarks, including DollyEval, SelfInst \cite{selfinst}, Super-Natural Instructions \cite{sinst}, and Vicuna evaluation \cite{vicuna}, to comprehensively assess their capabilities. We adopt two metrics for evaluation: ROUGE-L \cite{lin2004rouge} and GPT-5 feedback \cite{llmjudging}.
See Appendix~\ref{app:imple_detail} for details on the experimental setup.

\cref{figure:main_result} presents a detailed performance comparison across varying models and KD baselines, both with and without the CIP loss function (Here the weight {\small$\lambda$} is set as 0.001 for all the models).
Results show that \textbf{adding the CIP loss yields improvements in ROUGE-L and GPT-5 scores in most of the settings, regardless of the specific model or KD method}.
Crucially, these improvements demonstrate strong generalization capabilities, extending beyond the in-domain Dolly dataset to out-of-distribution benchmarks like Self-Instruct, SNI and Vicuna.
In addition, \cref{tab:hop_sparse} in Appendix~\ref{app:more_results} shows that student models trained with CIP exhibit higher Gini coefficients and lower entropy compared to their counterparts without CIP. This confirms that \textbf{the CIP loss can truly increase the sparsity of complex interactions}, thus effectively improving the model performance.

\textbf{Impact of the hyperparameter {\small$\lambda$}.}
As shown in \cref{figure:hyperparameter} (a) and (b), the sparsity of complex interactions increases with {\small$\lambda$}. In contrast, the ROUGE-L score (\cref{figure:hyperparameter} (c)) and the overlap rate of complex interactions (\cref{figure:hyperparameter} (d)) initially improve with {\small$\lambda$}, but subsequently degrade as {\small$\lambda$} continues to increase.
We attribute this to the trade-off between the two loss terms in Equation~(\ref{eq:kd_loss}), where {\small$\mathcal{L'}_{\text{CIP}}$} is designed to suppress the effects of complex interactions, while {\small$\mathcal{L}_{\text{KD}}$} compels the student model to align the output consistency with the teacher model.
When {\small$\lambda$} is relatively small, the impact of {\small$\mathcal{L'}_{\text{CIP}}$} is mild. At this stage, {\small$\mathcal{L}_{\text{KD}}$} guides the student model to prioritize the suppression of task-irrelevant complex interactions while retaining salient interactions learned from the teacher model to maintain output consistency. Consequently, the overlap rate of complex interactions increases, driving an improvement in model performance.
However, as {\small$\lambda$} continues to increase, the strong impact of {\small$\mathcal{L'}_{\text{CIP}}$} leads to a decline in the effects of salient complex interactions learned from the teacher model.
Consequently, although the sparsity of complex interactions increases, the overlap rate drops significantly, leading to the degradation in model performance.
Therefore, there exists an optimal range for the weight {\small$\lambda$}.
By choosing a valid  {\small$\lambda$}, we can strike a balance between the sparsification of interactions and the retainment of salient interactions learned from the teacher model.

\section{Conclusion}
In this paper, we present a comprehensive study on the mechanism of KD via interactions. We discover the common mechanism of KD is retaining salient simple interactions of the teacher model while suppressing other interactions.
We further demonstrate that the performance gap among various KD methods is positively correlated with the sparsity and alignment of complex interactions.
Guided by these observations, we introduce a plug-and-play loss function named CIP that explicitly suppress complex interactions. Experiments confirm that CIP generally improves various KD baselines on both in-domain and out-of-distribution benchmarks.
Our study offers a new perspective on the ``black box" of KD and inspires future research to utilize interactions to optimize the distillation process of LLMs.

\clearpage

\section*{Impact Statement}
This paper presents work whose goal is to advance the field of Machine Learning. There are many potential societal consequences of our work, none which we feel must be specifically highlighted here.

\bibliography{example_paper}
\bibliographystyle{icml2026}

\newpage
\appendix
\onecolumn

\section{Additional Related Work}\label{app:add_related}

\textbf{Knowledge distillation.}
Beyond the output logits, intermediate representations contain rich information for knowledge transfer. Early works like FitNets~\cite{layerkd4} introduced ``hints'' to align intermediate layers. Subsequent research focused on transferring attention maps~\cite{layerkd1} or compressing specific architectures like BERT using layer-wise strategies, as seen in Patient-KD~\cite{layerkd6}, MobileBERT~\cite{layerkd2}, and TinyBERT~\cite{layerkd3}. More recent approaches have explored contrastive representation distillation~\cite{layerkd9} and task-aware layer-wise distillation~\cite{layerkd5}. Others have proposed homotopic distillation for pre-trained transformers~\cite{layerkd7} or utilized Wasserstein distance to match feature distributions~\cite{layerkd8}.
While standard KD minimizes the KL divergence, numerous studies have sought to refine this objective. Fundamental analyses of KD's efficacy~\cite{fkl1} and the bias-variance tradeoff~\cite{fkl3} have driven improvements such as Decoupled KD~\cite{fkl4} and the Teacher Assistant strategy~\cite{fkl2}. To address optimization challenges, researchers have proposed multi-level logit distillation~\cite{fkl5}, logit standardization~\cite{fkl6}, and transformed teacher matching~\cite{fkl7}. In the context of large language models and sequence generation, recent works have investigated $f$-divergence minimization~\cite{wen-etal-2023-f}, $\alpha$-$\beta$-divergence~\cite{abkd}, and re-evaluated the role of KL divergence~\cite{wu2025rethinking}. Additionally, self-knowledge distillation via dropout~\cite{rkl1} and pre-training distillation methods like MiniPLM~\cite{rkl3} have been developed to enhance efficiency.
Knowledge distillation has been effectively adapted for various downstream applications beyond standard model compression. In computer vision, techniques like DetKDS~\cite{kdobj} optimize distillation search specifically for object detectors. In the domain of generative language models, PromptKD~\cite{kdimag} utilizes prompt tuning to distill student-friendly knowledge. Furthermore, distillation is applied to critical safety tasks, such as the detection and mitigation of hallucinations in large language models~\cite{kdintrgen}.

\section{Masking Strategies of Input Variables}\label{app:masking_strategy}
In attribution method research, it is common to employ a specific token or embedding to mask the input variables of an LLM \cite{lundberg2017unified, ancona2019explaining, fong2019understanding} and use changes in network outputs on the masked samples to estimate attributions of different input variables. The selection of a masking approach is complex, as each method has its weakness. For example, replacing input variables with the mean baseline value (the average of all samples) or the zero baseline value can introduce out-of-distribution signals, thereby providing the model with artificial information, such as uniform grey or black dots in an image \cite{dabkowski2017real, ancona2019explaining, sundararajan2017axiomatic}. Additionally, blurring image pixels using a Gaussian kernel \cite{fong2017interpretable, fong2019understanding} as the masked state removes high-frequency signals but fails to eliminate low-frequency signals \cite{covert2020feature, sturmfels2020visualizing}.

Given these challenges, we adopt a token replacement strategy, which is standard for the text domain. This involves substituting the target input word with a dedicated $\text{[MASK]}$ token at the embedding level. For example, to mask the word "green" in the input "He is a green hand," we would provide the LLM with the modified input "He is a [MASK] hand." This approach effectively nullifies the specific semantic contribution of the target word without introducing out-of-distribution artifacts, ensuring a clean and consistent baseline for our interaction analysis. For the specific $\text{[MASK]}$ token for each LLM, please refer to Section 5 for details.

\section{Details about How to Compute $v(x)$ for Text Generation Tasks}\label{app:output_metric}
Prior studies typically calculate $v(x)$ based on the probability of generating a single next token. For example, given an LLM {\small$v$} and an input sentence {\small$\boldsymbol{x}$} with {\small$n$} input words indexed by {\small$N=\{1,2,\ldots,n\}$}, let {\small$v(\boldsymbol{x}) \in \mathbb{R}$} denote the scalar output of the LLM. Here we set {\small$v(\boldsymbol{x}) = \text{log} \frac{p(y=y^{\text{*}} \vert \boldsymbol{x})}{1 - p(y=y^{\text{*}} \vert \boldsymbol{x})} \in \mathbb{R}$}, where {\small$p(y=y^{\text{*}} \vert \boldsymbol{x})$} represents the probability of generating the ground truth token {\small$y^{\text{*}}$} given the input {\small$\boldsymbol{x}$}. 

However, this token-level approach fails to capture the complex reasoning process inherent in generating long sentences, making it insufficient for standard text generation tasks. To address this limitation, we extend the definition of $v(x)$ from a single-token basis to a sequence-level metric.

For text generation task and given the ground truth sequence {\small$y^* = (y^*_1, \ldots, y^*_L)$}, we define {\small$\bar{p}$} as the geometric mean of the probabilities of each ground truth token in {\small$y^*$}, \textit{i.e.}, {\small$\bar{p} = \left( \prod_{l=1}^{L} p(y_l^{\text{*}} \vert x, y_{<l}^{\text{*}}) \right)^{1/L}$}
Then scalar output {\small$v(x)$} is defined as {\small$v(x) = \text{log} \frac{\bar{p}}{1 - \bar{p}} \in \mathbb{R}$}. To ensure numerical stability, we clip {\small$\bar{p}$} to {\small$[\epsilon, 1-\epsilon], \epsilon = 10^{-7}$} in our implementation.

Our choice of the scalar output function {\small$v(x)$} is theoretically grounded in the standard evaluation metric for generative models: Perplexity (PPL). 
Formally, the perplexity of a sequence is defined as {\small$\text{PPL} = \left( \prod_{l=1}^{L} p(y_l^{\text{*}} \vert x, y_{<l}^{\text{*}}) \right)^{-1/L}$}. 
Our geometric mean probability {\small$\bar{p}$} is strictly the inverse of perplexity, \textit{i.e.}, {\small$\bar{p} = \text{PPL}^{-1}$}, representing the length-normalized average likelihood assigned to the ground truth.
To satisfy the Universal Matching Property of Harsanyi interactions which requires the output space to be unbounded (in $\mathbb{R}$ rather than $[0,1]$), we apply the log-odds transformation to {\small$\bar{p}$}. 
Thus, {\small$v(x)$} quantifies the model's confidence in generating the precise ground truth sequence, normalized by length and mapped to a theoretically rigorous additive space.

\section{Proof of Theorem}\label{app:proof}
\subsection{Proof of Universal Matching Property}\label{app:universal_proof}
In this section, we provide the formal proof for the Universal Matching Property of the complete AND interaction framework. We will demonstrate that the output of the surrogate logical model, which is the sum of all AND interaction effects, can perfectly match the output of the LLM for any masked sample.

The \textbf{surrogate logical model $\phi(\cdot)$} is defined as follows:
\begin{equation}
\begin{aligned}
    \phi(x_T) \triangleq  \phi(x_\emptyset)  +  \sum_{S\subseteq N, S\ne \emptyset} \mathds{1}_{\text{\rm AND}}{(S \mid x_T)} \cdot I^{\text{\rm AND}}_S
\end{aligned}
\end{equation}
where the AND trigger function $\mathds{1}_{\text{\rm AND}}({S\mid x_T}) \in \{0,1\}$ represents an \textbf{AND relationship} between input variables in $S$, which can also be termed \textbf{AND interaction pattern}. The scalar weight $I^{\text{\rm AND}}_S$ quantifies the effect of an AND relationship, which can also be termed \textbf{AND interaction effect}. 
An AND relationship is activated only by the joint presence of all input variables in the set $S$, \textit{i.e.}, all input variables in $S$ are not masked. For instance, given the input sentence $x = \textit{``He\ is\ a\ green\ hand,''}$ the co-occurrence of the input variables in the set $S = \{\textit{green},\textit{hand}\}$ contributes a numerical effect $I^{\text{\rm AND}}_S$ that pushes the surrogate logical model's inference towards the semantic meaning of $\textit{``beginner."}$ If an AND interaction $S$ is triggered, \textit{i.e.}, $\mathds{1}_{\text{\rm AND}}({S\mid x_T}) = 1$, the corresponding interaction effect $I^{\text{\rm AND}}_S$ is added to the output of the logical model. Otherwise, if any word in $S$ is masked and the AND interaction is not triggered, \textit{i.e.}, $\mathds{1}_{\text{\rm AND}}({S\mid x_T}) = 0$, the interaction effect $I^{\text{\rm AND}}_S$ is not added to the output of the logical model. 
$x_{\emptyset}$ represents that all input variables in $N$ are masked.

\textbf{Definition of universal matching property for AND interactions.}
When the scalar weights in the surrogate logical model $\phi(\cdot)$ are set to $I^{\text{\rm AND}}_S =\sum\nolimits_{S' \subseteq S}(-1)^{|S|-|S'|}v_{\text{and}}(x_{S'})$, the output of $\phi(\cdot)$ can always match the output score of the LLM $v(\cdot)$, \textit{i.e.}, $\forall T\subseteq N, v(x_T)=\phi(x_T)$. Here $v_{\text{and}}(x_T) = v(x_T)$.

We need to prove that given an input sample $x$, for each masked sample $\{x_T|T\subseteq N\}$, the network output score $v(x_T) \in \mathbb{R}$ can be well matched by the surrogate logical model $\phi(x_T)$. The surrogate logical model $\phi(x_T)$ uses the sum of AND interactions to accurately explain/match the network output score $v(x_T)$.
\begin{equation}
\begin{aligned}
    &\forall T\subseteq N, v(x_T) = \phi(x_T). \\
    &\phi(x_T) =   \phi(x_\emptyset)  +  \sum_{S\subseteq N, S\ne \emptyset} \mathds{1}_{\text{\rm AND}}{(S \mid x_T)} \cdot I^{\text{\rm AND}}_S  \\
    & \,\enspace \qquad =  \underbrace{v(x_\emptyset) + \sum\nolimits_{ S\subseteq T, S\ne \emptyset }I^{\text{\rm AND}}_S}_{v_{ \text{\rm and}}(x_T)} 
    \label{eq:universal}
\end{aligned}
\end{equation}

\begin{proof} \textbf{Universal matching property of AND interactions.}
For all $2^n$ masked samples $\{\bm{x}_T\mid T\subseteq N\}$, what we need to prove is that the output $v_{\text{\rm and}}(\bm{x}_T)$ of an LLM can be universally explained by all the interactions in $T\subseteq N$, \textit{i.e.}, $\forall S\subseteq T, S\ne \emptyset, v_{\text{\rm and}}(\bm{x}_T)=\sum_{ S\subseteq T, S\ne \emptyset }I_S^{\text{\rm AND}}(\bm{x})=v(\bm{x}_\emptyset) + \sum_{ S\subseteq T, S\ne \emptyset }I_S^{\text{\rm AND}}$. Here, $v(\bm{x}_\emptyset)=v_{\text{\rm and}}(\bm{x}_\emptyset)$.

According to the definition of the AND interaction, $I_S^{\text{\rm AND}}(\bm{x}) =  \sum\nolimits_{L \subseteq S}(-1)^{|S|-|L|}v_{\text{and}}(\bm{x}_L)$. 
To simplify the computation of the sum of AND interactions $\sum_{ S\subseteq T, S\ne \emptyset }I_S^{\text{\rm AND}}(\bm{x}) =  \sum\nolimits_{ S\subseteq T, S\ne \emptyset } \sum\nolimits_{L \subseteq S} (-1)^{\vert S \vert - \vert L \vert} v_{\text{and}}(\bm{x}_L)$, we exchange the order of summation of the set $L\subseteq S\subseteq T$ and the set $S \supseteq L$. Given a set of input variables $L$, we compute all linear combinations of all sets $S$ containing $L$ with respect to the model outputs $v_{\text{and}}(\bm{x}_S)$, \textit{i.e.}, $\sum\nolimits_{S: L \subseteq S \subseteq T} (-1)^{|S|-|L|}v_{\text{and}}(\bm{x}_L)$. 
Then, we compute all summations over the set $L\subseteq T$ as $\sum_{ S\subseteq T, S\ne \emptyset }I_S^{\text{\rm AND}}(\bm{x}) =\sum\nolimits_{L \subseteq T}\sum\nolimits_{S: L \subseteq S \subseteq T} (-1)^{|S|-|L|}v_{\text{and}}(\bm{x}_L) - v_\text{and}(\bm{x}_\emptyset)$. Then, we can compute different cases of $L\subseteq S\subseteq T$ as follows:

(1) When $L=T=S$, $\sum\nolimits_{S: L \subseteq S \subseteq T} (-1)^{|S|-|L|}v_{\text{and}}(\bm{x}_L)=(-1)^{|T|-|T|} v_{\text{and}}(\bm{x}_L) = v_{\text{and}}(\bm{x}_L)$.

(2) When $L\subseteq S\subseteq T, L\ne T$, let us consider the linear combinations of all sets $S$ with number $|S|$ for the model output $v_{\text{and}}(\bm{x}_L)$, respectively. Let $m := |S| - |L|$, ($0\le m\le |T|-|L|$),  then there are a total of $C_{|T|-|L|}^{m}$ combinations of all sets $S$ of order $|S|$. Given $L$, accumulating the model outputs $v_{\text{and}}(\bm{x}_L)$ corresponding to all $S\supseteq L$, we can get $\sum\nolimits_{S: L \subseteq S \subseteq T} (-1)^{|S|-|L|}v_{\text{and}}(\bm{x}_L) = v_{\text{and}}(\bm{x}_L) \cdot \underbrace{\sum\nolimits_{m=0}^{\vert T \vert - \vert L \vert} C_{|T|-|L|}^m(-1)^m}_{=0} = 0$.

Considering all the cases, the complete derivation of the sum of AND interactions is as follows.

{\begin{equation}\begin{aligned}
    \label{eq:universal-and}
    &\sum\nolimits_{ S\subseteq T, S\ne \emptyset } I_S^{\text{\rm AND}} \\
    = &  \sum\nolimits_{ S\subseteq T, S\ne \emptyset } \sum\nolimits_{L \subseteq S} (-1)^{\vert S \vert - \vert L \vert} v_{\text{and}}(\bm{x}_L) \\
    = & \sum\nolimits_{L \subseteq T} \sum\nolimits_{S: L \subseteq S \subseteq T} (-1)^{\vert S \vert - \vert L \vert} v_{\text{and}}(\bm{x}_L)  - v_{\text{and}}(\bm{x}_\emptyset) \\
    = & \underbrace{v_{\text{and}}(\bm{x}_T)}_{L = T} + \sum\nolimits_{L \subseteq T, L \neq T} v_{\text{and}}(\bm{x}_L) \cdot \underbrace{\sum\nolimits_{m=0}^{\vert T \vert - \vert L \vert} C_{|T|-|L|}^m(-1)^m}_{=0}   - v_{\text{and}}(\bm{x}_\emptyset) \\
     = & v_{\text{and}}(\bm{x}_T)  - v(\bm{x}_\emptyset)
\end{aligned}\end{equation}}

Therefore, we have proved that $\forall \emptyset \neq T\subseteq N, v_{\text{\rm and}}(\bm{x}_T)=v(\bm{x}_\emptyset) + \sum_{ S\subseteq T, S\ne \emptyset }I_S^{\text{\rm AND}}$. We can easily get $v(x_T) =\phi (x_T)= v_{ \text{\rm and}}(x_T) = v(x_\emptyset) +  \sum_{S\subseteq T, S\ne \emptyset}I^{\text{\rm AND}}_S$, thus, we obtain the universal matching property of AND interactions.

\end{proof}

\subsection{Proof of Partition Size Constraint for the Sampling Method}\label{app:sampling_proof}

\textbf{Proposition.} \textit{To ensure that any non-empty union of the partitioned subsets represents a complex interaction, the number of partitions $m$ must satisfy $m \le 3$.}

\begin{proof}
Given an input sequence {\small$x$} with {\small$n$} words indexed by {\small$N = \{1, \dots, n\}$}, and let $\mathcal{P} = \{S_1, S_2, \dots, S_m\}$ be a partition of $N$ such that $\bigcup_{i=1}^m S_i = N$ and $S_i \cap S_j = \emptyset$ for $i \neq j$.
We define the set of complex interactions as $\Omega^{\text{complex}} = \{ S \subseteq N \mid \lceil n/3 \rceil \le |S| \le n\}$.

To guarantee that the sampling strategy is valid, we require that any non-empty combination of these subsets constitutes a complex interaction. Formally, this condition is expressed as:
\begin{equation}
    \forall K \subseteq \{1, \dots, m\}, K \neq \emptyset \implies \bigcup_{i \in K} S_i \in \Omega^{\text{complex}}
\end{equation}
Substituting the definition of $\Omega^{\text{complex}}$, this requires:
\begin{equation}
    \left| \bigcup_{i \in K} S_i \right| \ge \lceil n/3 \rceil
\end{equation}
Since this condition must hold for \textit{all} valid $K$, it must specifically hold for the singleton sets where $|K|=1$. Let $K = \{i\}$ for any $i \in \{1, \dots, m\}$. The condition implies a lower bound on the size of each individual subset:
\begin{equation}
    \label{eq:lower_bound}
    |S_i| \ge \lceil n/3 \rceil, \quad \forall i \in \{1, \dots, m\}
\end{equation}
On the other hand, since $\{S_i\}$ forms a disjoint partition of $N$, the sum of their cardinalities must equal the total number of words $n$:
\begin{equation}
    \sum_{i=1}^m |S_i| = |N| = n
\end{equation}
Combining this with the inequality in Eq.~\eqref{eq:lower_bound}, we obtain:
\begin{equation}
    \sum_{i=1}^m \lceil n/3 \rceil \le \sum_{i=1}^m |S_i| = n
\end{equation}
Simplifying the summation:
\begin{equation}
    m \cdot \lceil n/3 \rceil \le n
\end{equation}
Solving for $m$:
\begin{equation}
    m \le \frac{n}{\lceil n/3 \rceil}
\end{equation}
By the definition of the ceiling function, we know that $\lceil n/3 \rceil \ge n/3$. Therefore:
\begin{equation}
    \frac{n}{\lceil n/3 \rceil} \le \frac{n}{n/3} = 3
\end{equation}
Consequently, we derive the strict upper bound $m \le 3$. This proves that to strictly satisfy the definition of complex interactions for all sampled combinations, the partition size $m$ is mathematically constrained to be a small constant (specifically 2 or 3).
\end{proof}

\section{Implementation Details of Analysis}\label{app:analy_detail}
To construct the set of input variables $N$ for our interaction analysis, we extracted meaningful words from the ``Instruction'' segment of each input. Following the preprocessing protocol of \citet{shen2023can}, we defined meaningful words as tokens that are neither punctuation marks nor stop words listed in the NLTK library \cite{nltk}. We standardized the number of input variables at $n=|N|=13$. As detailed in Appendix~\ref{app:imple_dataset}, the instructions in the \texttt{databricks-dolly-15k} dataset are usually short; thus, a set size of $n=13$ is empirically sufficient to encompass the core semantic content of the samples. During the masking process, we exclusively applied masking to the selected tokens within $N$, while leaving all other ``background'' tokens unchanged.

\subsection{Baselines}\label{app:analy_baselines}

\begin{itemize}
    \item \textbf{KD} \cite{hinton2015distilling} minimizes the forward Kullback-Leibler Divergence (FKLD) between the teacher and student distributions using the fixed training dataset.
    \item \textbf{SeqKD} \cite{kim2016sequence} maximizes the likelihood of high-probability sequences generated by the teacher, effectively functioning as Supervised Fine-Tuning (SFT) on teacher-generated outputs.
    \item \textbf{ImitKD} \cite{lin2020autoregressive} addresses the training-inference mismatch by introducing Student-Generated Outputs (SGO). It optimizes the standard KLD loss on sequences generated by the student model, forcing the student to mimic the teacher's behavior on its own exploration path.
    \item \textbf{MiniLLM} \cite{guminillm} also utilizes SGOs but employs an on-policy gradient method. It aims to minimize the Reverse KLD (RKLD) to prevent the student from assigning high probabilities to low-probability teacher regions (mode collapse).
    \item \textbf{GKD} \cite{agarwal2024policy} adopts the Generalized Jensen-Shannon Divergence (JSD) as the objective. It trains on a mixture of datasets, combining fixed data (ground truth or teacher-generated) with on-policy student-generated sequences.
    \item \textbf{DistiLLM} \cite{ko2024distillm} introduces the Skew KL (or Skew RKL) divergence to stabilize training. It utilizes a mixed dataset and proposes an adaptive off-policy method to selectively incorporate SGOs based on validation loss, thereby filtering out noisy generated data.
\end{itemize}

\section{Implementation Details of Training}\label{app:imple_detail}

\subsection{Training Details}
For training the teacher and student models, we used eight RTX PRO 6000 96GB GPUs.  Our experimental setup for training LMs on \texttt{databricks-dolly-15k} primarily follows the experimental setup for \citet{ko2024distillm}. For models within 1B parameters, we search for the learning rates in \{5e-4, 1e-4, 5e-5\}, the batch sizes in \{8, 16, 32\} and train these models for 20 epochs. For models that have more than 1B parameters, we search for the learning rate in \{5e-5, 1e-5, 5e-6\}, the batch sizes of \{8, 16, 32\}, and train these models for 10 epochs. We fully use the distillation loss for the instruction-following dataset and language modeling loss for OpenWebText \cite{openwebtext} corpus. The checkpoints of each student are selected by the ROUGE-L scores on the validation set. To ensure the effectiveness of CIP, specifically when employing Student-Generated Outputs (SGO), we calculate CIP based on the ground truth from the fixed dataset. Furthermore, we only calculated CIP for samples where the number of input variables in the ``Instruction" segment, maintaining consistency with the settings used in our interaction analysis.

\subsection{Evaluation}
For evaluating the teacher and student models, we applied two RTX PRO 6000 96GB GPUs. Following \citet{guminillm,ko2024distillm}, we adopt a prompt template as shown in \cref{fig:prompt}. We sample the responses from each model using a temperature of 1.0, a max-length limit of 512. For GPT-5 feedback, we use a popular prompt introduced in \citet{llmjudging} which is illustrated in \cref{fig:judge_prompt}, the prompt asking GPT-5 to compare model-generated responses with the ground truth answers and give 1-10 scores for both responses. We report the ratio of the total score of model responses and ground truth answers by following \citet{ko2024distillm}.

\begin{figure}[!htb]
    \centering
    \includegraphics[width=0.6\textwidth]{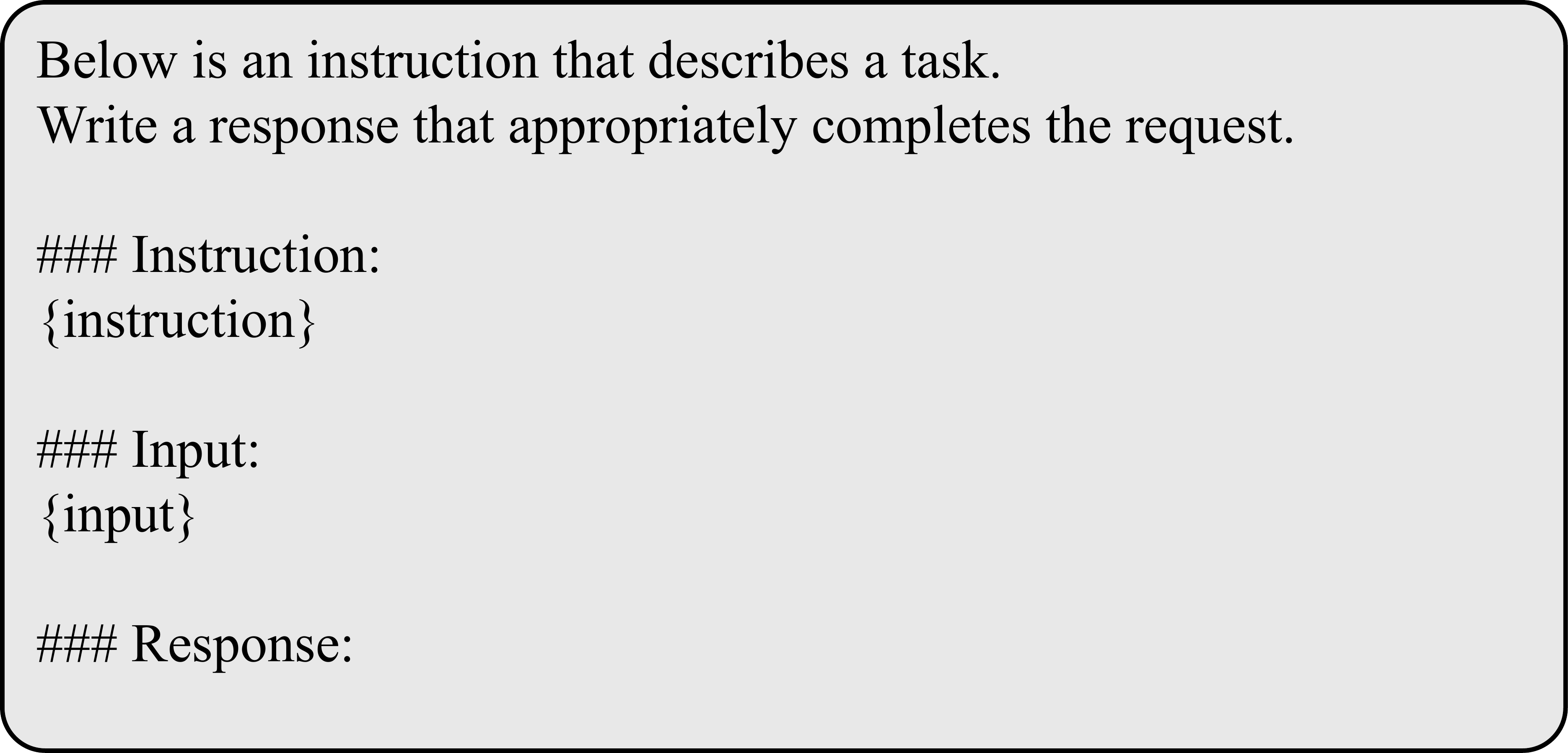}
    \caption{The prompt template for training and evaluation of instruction-following task experiments.}
    \label{fig:prompt}
\end{figure}

\begin{figure}[!htb]
    \centering
    \includegraphics[width=0.6\textwidth]{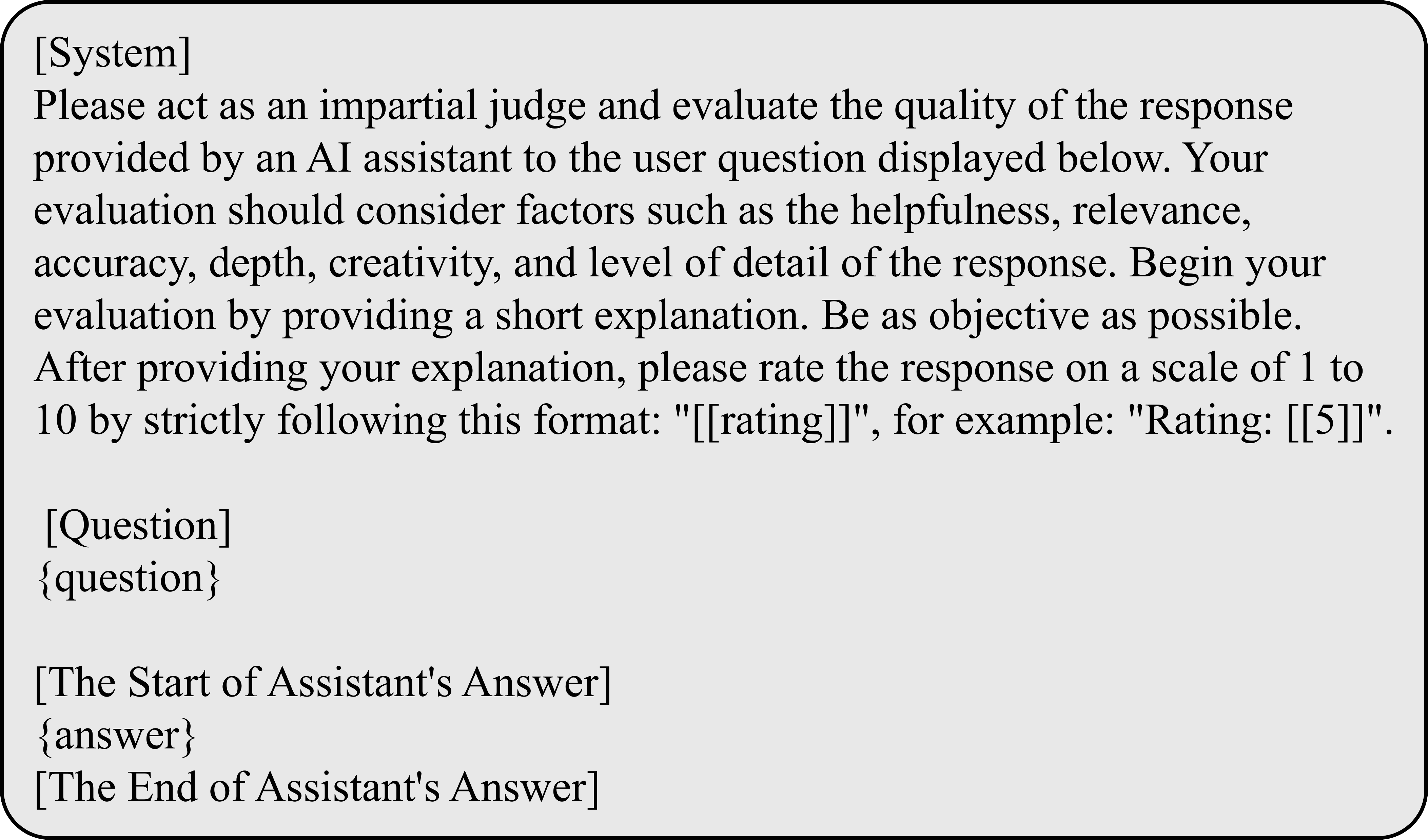}
    \caption{The prompt template for single-answer grading of GPT-5 feedback from \citet{llmjudging}.}
    \label{fig:judge_prompt}
\end{figure}

\subsection{Dataset Description}\label{app:imple_dataset}

\begin{itemize}
    \item \textbf{\texttt{databricks-dolly-15k}} \cite{dolly}: This is an open-source dataset comprising instruction-following records generated by Databricks employees. It encompasses several behavioral categories outlined in \cite{instrfollow}, including brainstorming, classification, closed/open QA, generation, information extraction, and summarization. Some  specific examples provided in \cref{tab:dolly_cases}.

    \item \textbf{Self-Instruct} \cite{selfinst}: SelfInstruct is a framework designed to enhance the language model's instruction-following capabilities by leveraging the model's own outputs to generate a vast set of instructional data. It consists of 52k instructions and 82k instance inputs/outputs for fine-tuning, supplemented by 252 expert-written tasks for practical applications and an additional 50k examples from public datasets for benchmarking.

    \item \textbf{Vicuna} \cite{vicuna}: Following the evaluation protocol of \cite{ko2024distillm}, we incorporate the set of 80 challenging questions originally curated for evaluating the Vicuna model to assess model performance on complex queries.

    \item \textbf{Super-Natural Instructions} \cite{sinst}: This benchmark features a diverse collection of 1,616 NLP tasks paired with expert-written instructions, spanning 76 distinct task types. For evaluation, we utilize its test set, which comprises approximately 9k samples drawn from 119 different tasks.
\end{itemize}

\begin{table}[!htb]
\centering
\small
\caption{Some cases from the \texttt{databricks-dolly-15k} test set.}
\begin{tabularx}{\linewidth}{l X}
\toprule
\textbf{Field} & \textbf{Content} \\
\midrule
\multicolumn{2}{c}{\textbf{Case \#1}} \\
\midrule
\textbf{Topic} & open QA \\
\textbf{Instruction} & What is the name of the largest red-light district in Amsterdam? \\
\textbf{Input Variables} & \texttt{["What", "is", "the", "name", "of", "the", "largest", "red-light", "district", "in", "Amsterdam"]} \\
\textbf{Input} & None \\
\textbf{Ground Truth} & The largest red-light district in Amsterdam is De Wallen. \\
\midrule
\multicolumn{2}{c}{\textbf{Case \#2}} \\
\midrule
\textbf{Topic} & closed QA \\
\textbf{Instruction} & What is the old name for The Go Between Bridge? \\
\textbf{Input Variables} & \texttt{["What", "is", "the", "old", "name", "for", "The", "Go", "Between", "Bridge"]} \\
\textbf{Input} & The Go Between Bridge, formerly known as the Hale Street Link, is a toll bridge for vehicles, pedestrians and cyclists over the Brisbane River in inner-city Brisbane, Queensland, Australia. The bridge connects Merivale and Cordelia Streets in West End to Hale Street and the Inner City Bypass at Milton. It is Brisbane's first inner-city traffic bridge in 40 years. \\
\textbf{Ground Truth} & Hale Street Link \\
\midrule
\multicolumn{2}{c}{\textbf{Case \#3}} \\
\midrule
\textbf{Topic} & classification \\
\textbf{Instruction} & Identify which car manufacturer is German or American: Opel, GMC? \\
\textbf{Input Variables} & \texttt{["Identify", "which", "car", "manufacturer", "is", "German", "or", "American:", "Opel", "GMC"]} \\
\textbf{Input} & None \\
\textbf{Ground Truth} & Opel is German, GMC is American \\
\bottomrule
\end{tabularx}
\label{tab:dolly_cases}
\end{table}

\clearpage

\section{More Experiment Results}\label{app:more_results}
\subsection{Comparison of the training time with and without CIP loss}\label{app:time_comparison}

\begin{table}[h]
\caption{Comparison of the training time per batch (in seconds/batch) between different KD methods without and with the CIP loss function across various model sizes. The results are averaged over 20 epochs. The overhead is calculated as the increase in time relative to the original method.}
\label{tab:time_overhead}
\centering
\begin{tabular}{llcccc}
\toprule
\textbf{Model} & \textbf{Method} & \textbf{w/o CIP (s/batch)} & \textbf{w/ CIP (s/batch)} & \textbf{Increased Time (s/batch)} & \textbf{Increased Time (\%)} \\
\midrule
\multirow{6}{*}{GPT-2-0.1B} 
 & KD & 0.318 & 0.354 & 0.036 & 11.32 \\
 & SeqKD & 0.331 & 0.359 & 0.028 & 8.46 \\
 & ImitKD & 2.049 & 2.126 & 0.077 & 3.76 \\
 & MiniLLM & 2.659 & 2.741 & 0.082 & 3.08 \\
 & GKD & 1.675 & 1.745 & 0.070 & 4.18 \\
 & DistiLLM & 0.632 & 0.670 & 0.038 & 6.01 \\
\midrule
\multirow{6}{*}{GPT-2-0.3B} 
 & KD & 0.654 & 0.710 & 0.056 & 8.56 \\
 & SeqKD & 0.668 & 0.732 & 0.064 & 9.58 \\
 & ImitKD & 3.801 & 3.940 & 0.139 & 3.66 \\
 & MiniLLM & 4.913 & 5.062 & 0.149 & 3.03 \\
 & GKD & 3.148 & 3.277 & 0.129 & 4.10 \\
 & DistiLLM & 1.262 & 1.333 & 0.071 & 5.63 \\
\bottomrule
\end{tabular}
\end{table}

To assess the computational cost introduced by our proposed CIP loss, we compared the average training time per batch of various distillation methods with and without the CIP loss. The experiments were conducted on a server equipped with 4 $\times$ RTX PRO 6000 96GB GPUs. We measured the average training time (s/batch) over 20 epochs for both GPT-2-0.1B and GPT-2-0.3B student models. As shown in Table~\ref{tab:time_overhead}, the integration of CIP introduces a marginal computational overhead, ranging from approximately 3.03\% to 11.32\% across different methods and model sizes. This slight increase is primarily attributed to the sampling of subsets and the calculation of interaction terms. Considering the significant performance improvements demonstrated in the main experiments, we consider this computational cost to be negligible and acceptable for practical applications.

\clearpage
\subsection{More Results of the Main Experiments}\label{app:more_results_main}

\cref{figure:sparsity_dataset_all} illustrates the distributions of Gini coefficient $G(x)$ and Shannon entropy $H(x)$ across all samples in the test set. Results show that distilled student models demonstrate an obvious distributional shift towards higher Gini values and lower entropy compared to corresponding base models and SFT models.
\begin{figure*}[!htb]
\centering
\includegraphics[width=1\textwidth]{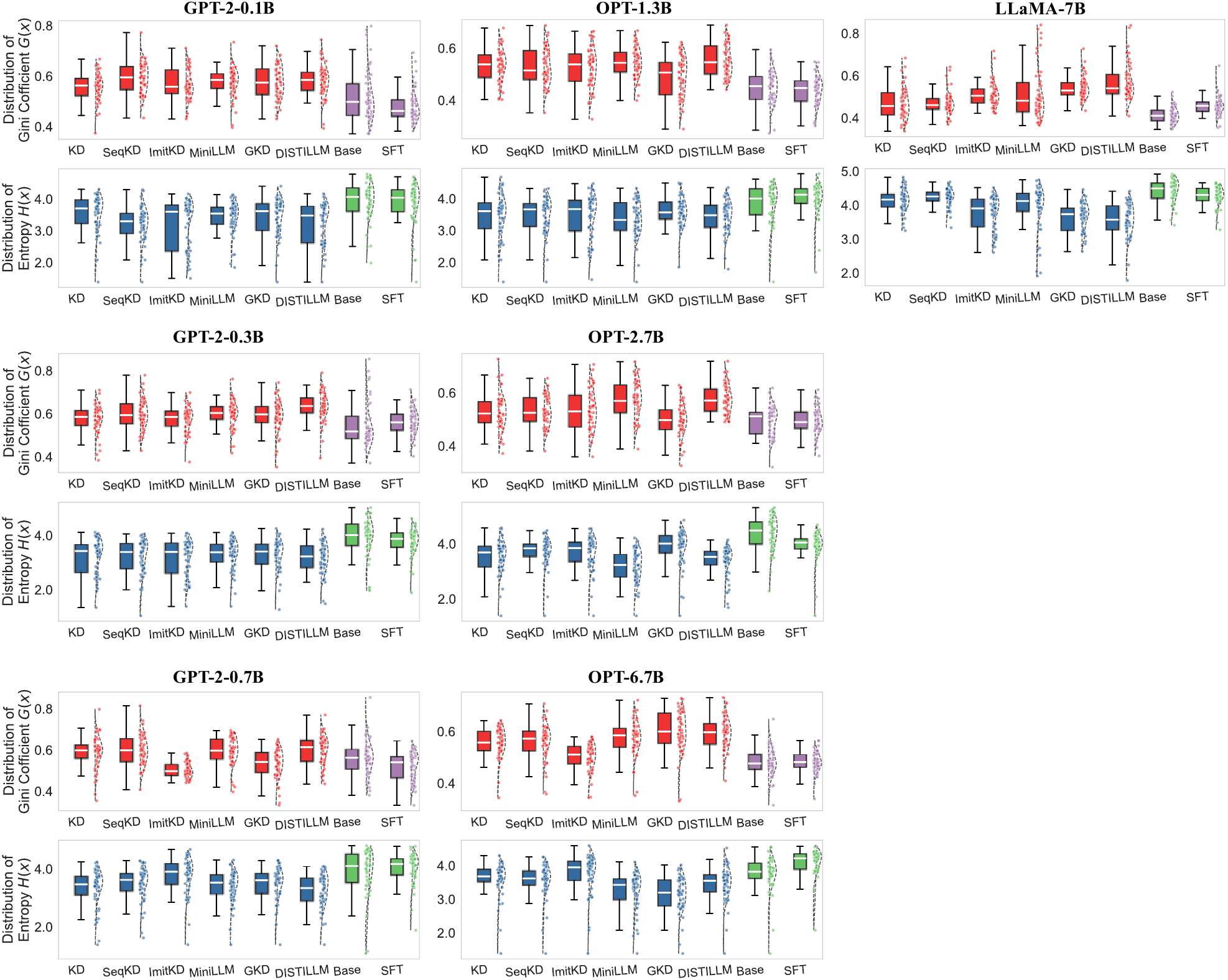}
\caption{The distribution of the Gini coefficient and entropy across varying samples.}
\label{figure:sparsity_dataset_all}
\end{figure*}

\clearpage
\cref{tab:overlap_macro_all} shows that distilled student models exhibit a higher overlap rate with the teacher model in terms of salient interactions than that of the non-distilled student models.

\begin{table*}[!htb]
\centering
\small
\renewcommand{\arraystretch}{1.2}
\caption{Comparing the student-teacher overlap rate of salient interactions before and after distillation. $\overline{Overlap@k}=\mathbb{E}_x \left[Overlap@k(x)\right]$ represents the average overlap rate across all samples.}
\label{tab:overlap_macro_improved}
\begin{tabular}{ll c cccccc}
\toprule
\multirow{2.5}{*}{\textbf{Metric}} & \multirow{2.5}{*}{\textbf{Model}} & \multicolumn{1}{c}{\textbf{Non-distilled}} & \multicolumn{6}{c}{\textbf{Distilled}} \\ 
\cmidrule(lr){3-3} \cmidrule(lr){4-9} 
 & & Base & KD & SeqKD & MiniLLM & ImitKD & GKD & DISTILLM \\ 
\midrule

\multirow{7}{*}{$k=0.05$} 
 & GPT-2-0.1B & 11.79\% & 18.62\% & 19.57\% & 17.37\% & 14.97\% & 17.05\% & 18.89\% \\
 & GPT-2-0.3B & 14.23\% & 20.84\% & 19.99\% & 22.00\% & 19.06\% & 19.23\% & 23.14\% \\
 & GPT-2-0.7B & 18.96\% & 21.89\% & 22.14\% & 25.22\% & 18.15\% & 23.10\% & 23.20\% \\
 & OPT-1.3B   & 17.30\% & 21.58\% & 23.26\% & 21.58\% & 16.26\% & 22.04\% & 17.51\% \\
 & OPT-2.7B   & 19.33\% & 22.48\% & 22.71\% & 23.70\% & 19.75\% & 20.59\% & 25.45\% \\
 & OPT-6.7B   & 20.78\% & 21.91\% & 22.87\% & 25.21\% & 21.04\% & 24.05\% & 24.94\% \\
 & LLaMA-7B   & 15.04\% & 18.09\% & 16.02\% & 20.22\% & 14.65\% & 20.72\% & 23.36\% \\ 
\hline 

\multirow{7}{*}{$k=0.1$} 
 & GPT-2-0.1B & 19.81\% & 24.84\% & 26.96\% & 23.68\% & 23.05\% & 26.69\% & 26.38\% \\
 & GPT-2-0.3B & 22.99\% & 30.54\% & 27.27\% & 31.61\% & 25.65\% & 27.45\% & 29.12\% \\
 & GPT-2-0.7B & 24.73\% & 28.40\% & 29.28\% & 32.86\% & 26.31\% & 31.00\% & 32.98\% \\
 & OPT-1.3B   & 21.96\% & 25.24\% & 26.48\% & 26.54\% & 21.47\% & 25.23\% & 25.71\% \\
 & OPT-2.7B   & 23.06\% & 26.90\% & 27.75\% & 31.04\% & 26.68\% & 26.20\% & 31.40\% \\
 & OPT-6.7B   & 24.93\% & 26.37\% & 29.43\% & 33.17\% & 25.23\% & 30.69\% & 32.22\% \\
 & LLaMA-7B   & 21.35\% & 24.18\% & 21.55\% & 27.62\% & 22.02\% & 29.43\% & 31.60\% \\ 
\hline

\multirow{7}{*}{$k=0.15$} 
 & GPT-2-0.1B & 25.45\% & 30.14\% & 32.91\% & 30.76\% & 29.58\% & 33.09\% & 33.05\% \\
 & GPT-2-0.3B & 27.84\% & 35.68\% & 33.79\% & 36.97\% & 33.47\% & 34.95\% & 34.80\% \\
 & GPT-2-0.7B & 30.21\% & 33.44\% & 33.43\% & 38.06\% & 31.61\% & 34.37\% & 38.27\% \\
 & OPT-1.3B   & 25.75\% & 30.67\% & 33.02\% & 31.33\% & 26.82\% & 28.98\% & 31.44\% \\
 & OPT-2.7B   & 27.50\% & 31.94\% & 32.49\% & 34.80\% & 31.08\% & 30.58\% & 36.22\% \\
 & OPT-6.7B   & 30.59\% & 32.14\% & 32.88\% & 37.52\% & 31.54\% & 35.45\% & 35.32\% \\
 & LLaMA-7B   & 27.57\% & 29.02\% & 26.27\% & 31.83\% & 28.19\% & 33.95\% & 35.63\% \\ 
\hline
\end{tabular}
\label{tab:overlap_macro_all}
\end{table*}

\clearpage
\cref{figure:high_low_sparsity_all} plots the change of Gini coefficients between distilled and base models for simple and complex interactions. Results show that the sparsity of complex interactions exhibits an obvious increase after distillation.

\begin{figure*}[!htb]
\centering
\includegraphics[width=1\textwidth]{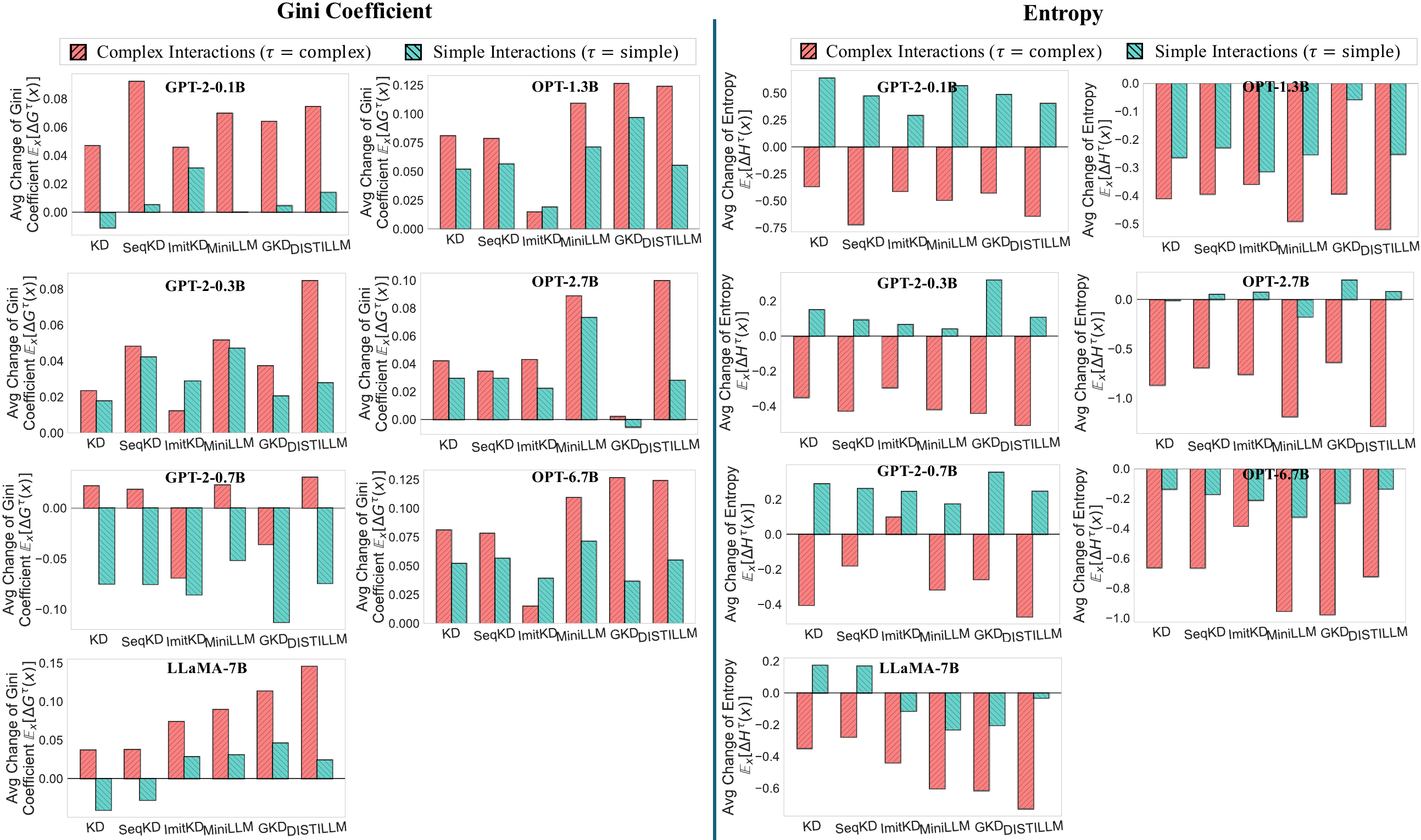}
\caption{Comparison of sparsity changes in simple versus complex interactions between non-distilled and distilled models.}
\label{figure:high_low_sparsity_all}
\end{figure*}

\clearpage
\cref{figure:high_low_overlap_all} shows the overlap rate of salient simple interactions consistently surpasses that of complex interactions. This indicates that the student model learn more simple salient interactions than complex salient interactions from the teacher model.

\begin{figure*}[!htb]
\centering
\includegraphics[width=0.95\textwidth]{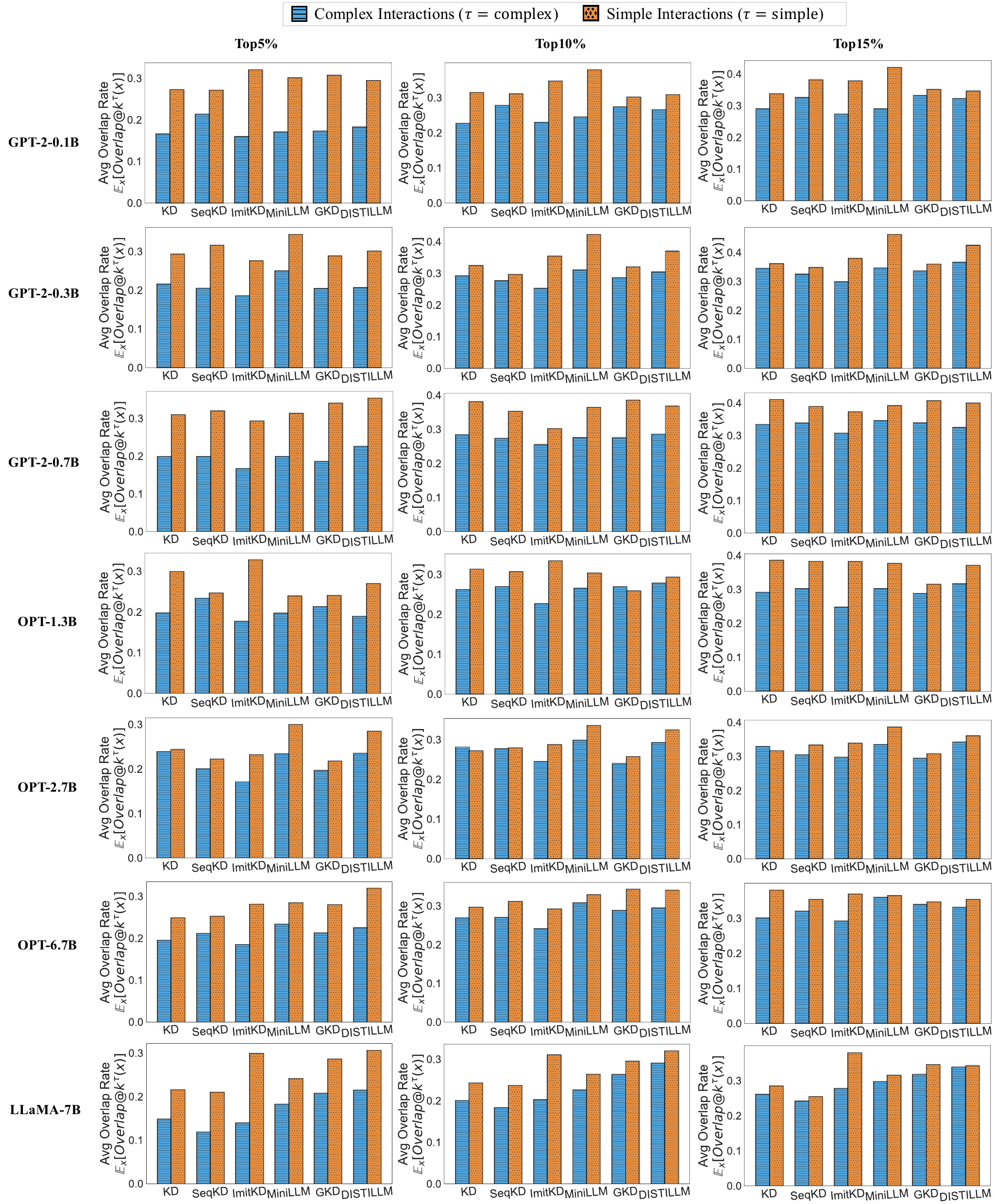}
\caption{Comparison of overlap rate between simple interactions and complex interactions after distillation.}
\label{figure:high_low_overlap_all}
\end{figure*}

\clearpage
\cref{figure:correlation_high_order_all} shows that the sparsity of complex interactions is positively correlated with model performance. Student models that exhibit higher Gini coefficients and lower entropy (indicating higher sparsity) in their complex interactions generally achieve superior ROUGE-L scores.

\begin{figure*}[!htb]
\centering
\includegraphics[width=1\textwidth]{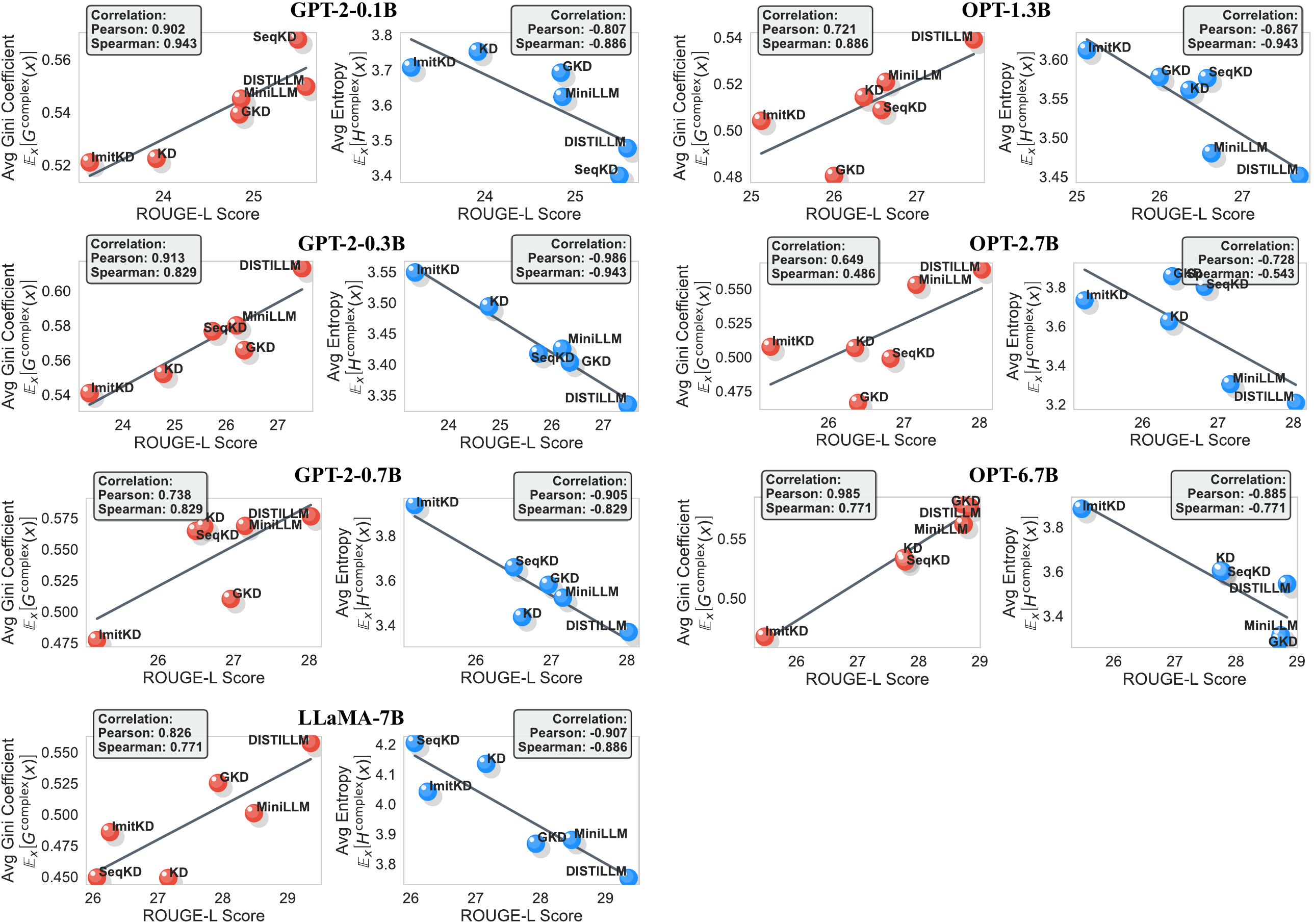}
\caption{Correlation between the sparsity of complex interactions and model performance.}
\label{figure:correlation_high_order_all}
\end{figure*}

\clearpage
\cref{figure:correlation_low_order_all} shows that there is no distinct relationship between the sparsity of simple interactions and the model performance.

\begin{figure*}[!htb]
\centering
\includegraphics[width=1\textwidth]{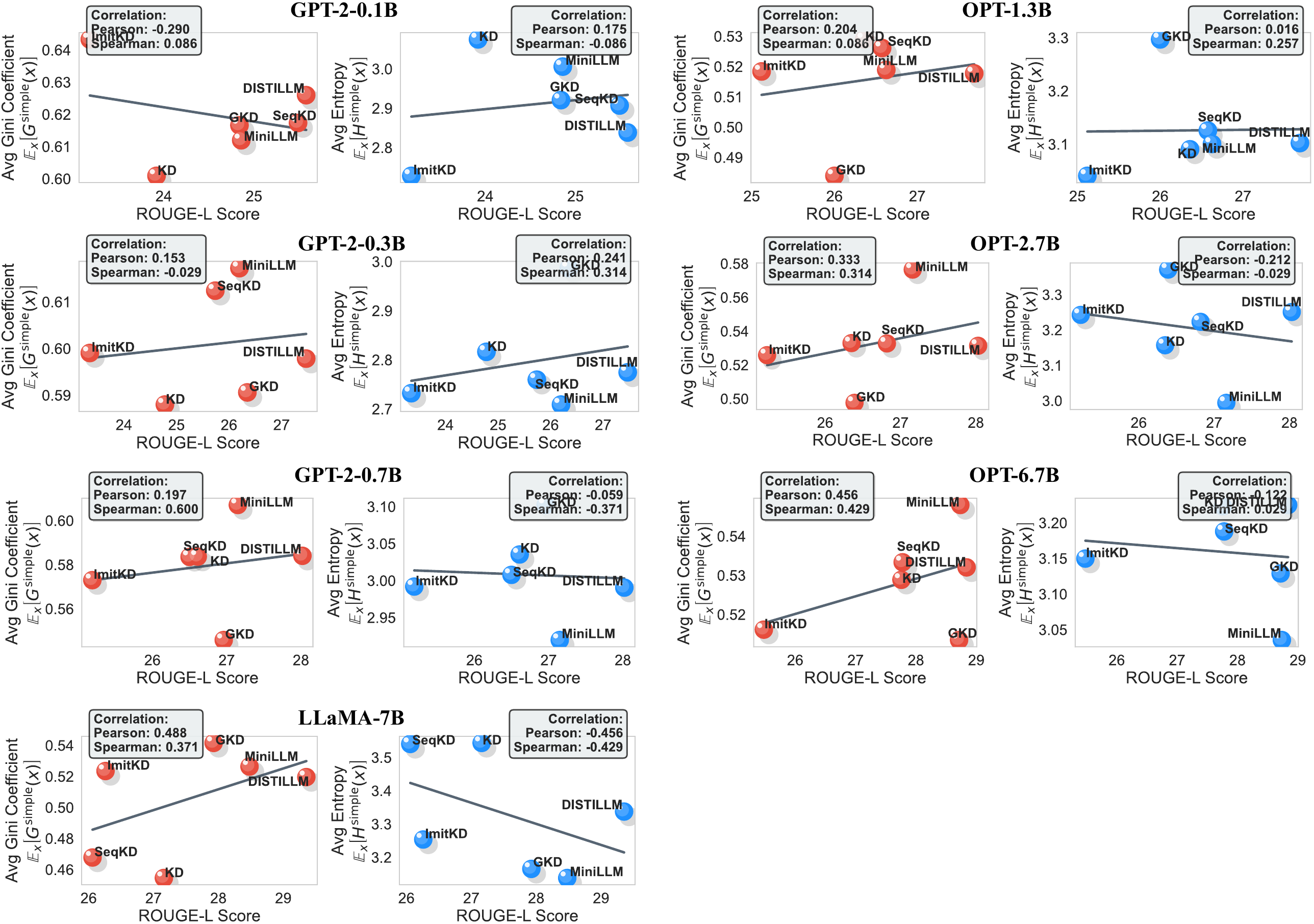}
\caption{Correlation between the sparsity of simple interactions and model performance.}
\label{figure:correlation_low_order_all}
\end{figure*}

\clearpage
As shown in \cref{figure:overlap_all}, the results reveal a positive correlation between model performance and the overlap rate of complex interactions. This indicates that student models
with higher performance learn more complex interactions from the teacher model compared to student models with lower performance. Counter-intuitively, the overlap rate of
simple interactions shows no distinct correlation with model performance, as shown in \cref{figure:overlap_all_low}.

\begin{figure*}[!htb]
\centering
\includegraphics[width=1\textwidth]{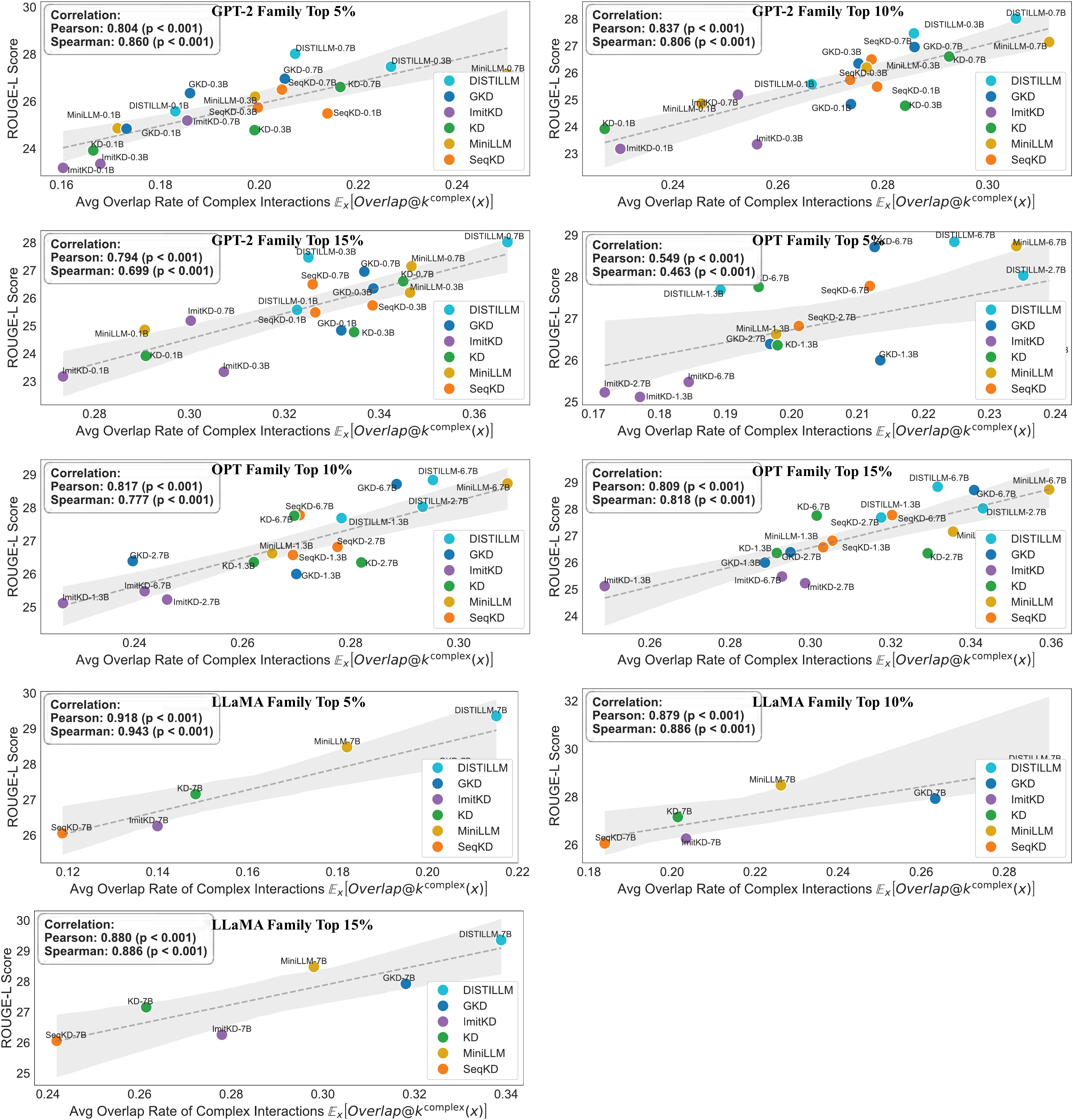}
\caption{Correlation between the student-teacher overlap rate of complex interactions and model performance.}
\label{figure:overlap_all}
\end{figure*}

\clearpage
As shown in \cref{figure:overlap_all_low}, simple interactions shows no distinct correlation with model performance.

\begin{figure*}[!htb]
\centering
\includegraphics[width=1\textwidth]{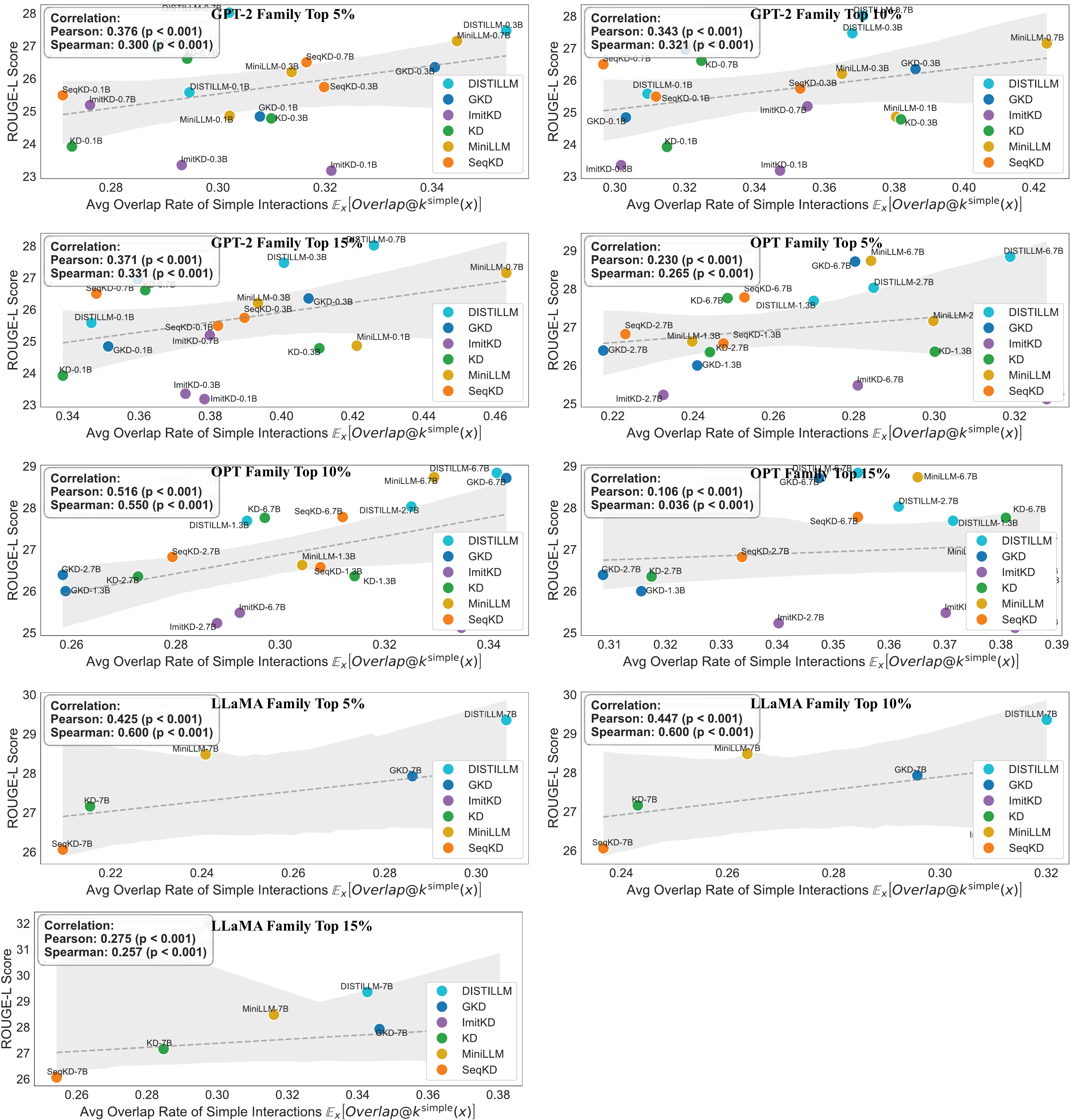}
\caption{Correlation between the student-teacher overlap rate of simple interactions and model performance.}
\label{figure:overlap_all_low}
\end{figure*}

\begin{table}[h]
\centering
\small
\caption{The comparison of ROUGE-L and GPT-5 Eval score between different KD methods without and with the CIP loss function across various model families and model sizes. Results show that adding CIP consistently improves the performance of all original KD methods.}
\label{tab:main_results}
\setlength{\tabcolsep}{2pt}
\begin{tabular}{cclllllllllllllll}
\toprule
\multirow{2}{*}{Model} & \multirow{2}{*}{\#Params} & \multicolumn{1}{c}{\multirow{2}{*}{Method}} & \multicolumn{4}{c}{Dolly} & \multicolumn{4}{c}{SelfInst} & \multicolumn{4}{c}{Vicuna} & \multicolumn{2}{c}{SNI} \\
\cmidrule(lr){4-7} \cmidrule(lr){8-11} \cmidrule(lr){12-15} \cmidrule(lr){16-17}
 &  & \multicolumn{1}{c}{} & \multicolumn{2}{c}{Rouge-L} & \multicolumn{2}{l}{GPT-5 Eval} & \multicolumn{2}{c}{Rouge-L} & \multicolumn{2}{l}{GPT-5 Eval} & \multicolumn{2}{c}{Rouge-L} & \multicolumn{2}{l}{GPT-5 Eval} & \multicolumn{2}{c}{Rouge-L} \\
 &  &  & w/o & w/ CIP & w/o & w/ CIP & w/o & w/ CIP & w/o & w/ CIP & w/o & w/ CIP & w/o & w/ CIP & w/o & w/ CIP \\ 
\midrule
\multirow{19}{*}{GPT-2} & 1.5B & Teacher & 26.42 & \multicolumn{1}{c}{-} & 47.33 & \multicolumn{1}{c}{-} & 14.65 & \multicolumn{1}{c}{-} & 32.01 & \multicolumn{1}{c}{-} & 16.37 & \multicolumn{1}{c}{-} & 39.71 & \multicolumn{1}{c}{-} & 25.24 & \multicolumn{1}{c}{-} \\
\cmidrule{2-17}
 & \multirow{6}{*}{0.1B} & KD & 23.92 & \textbf{23.97} & 32.19 & \textbf{33.90} & 10.02 & \textbf{10.63} & 24.60 & 24.51 & 14.55 & \textbf{15.80} & 24.28 & \textbf{27.42} & 18.29 & \textbf{21.02} \\
 &  & SeqKD & 25.49 & \textbf{25.54} & 35.11 & 33.43 & 11.85 & \textbf{12.33} & 25.22 & \textbf{25.94} & 15.68 & \textbf{15.76} & 24.02 & \textbf{26.11} & 20.79 & \textbf{21.21} \\
 &  & ImitKD & 23.18 & \textbf{23.63} & 31.61 & \textbf{32.59} & 10.64 & 10.08 & 24.06 & \textbf{24.15} & 15.25 & \textbf{15.59} & 24.54 & \textbf{24.28} & 17.61 & \textbf{18.78} \\
 &  & MiniLLM & 24.86 & \textbf{25.32} & 33.58 & \textbf{35.12} & 11.33 & \textbf{12.26} & 24.87 & \textbf{26.10} & 16.62 & \textbf{16.78} & 25.85 & \textbf{27.05} & 22.67 & \textbf{23.31} \\
 &  & GKD & 24.84 & \textbf{24.97} & 31.22 & \textbf{32.81} & 11.72 & \textbf{12.02} & 25.40 & \textbf{26.22} & 15.37 & \textbf{16.51} & 25.07 & \textbf{27.42} & 22.06 & \textbf{22.63} \\
 &  & DISTILLM & 25.58 & \textbf{26.02} & 34.34 & \textbf{34.74} & 12.18 & \textbf{12.52} & 26.75 & \textbf{26.78} & 16.81 & 16.80 & 27.68 & 26.63 & 23.53 & \textbf{23.67} \\
\cmidrule{2-17}
 & \multirow{6}{*}{0.3B} & KD & 24.78 & \textbf{26.73} & 37.19 & \textbf{42.91} & 10.00 & \textbf{12.17} & 25.40 & \textbf{30.77} & 14.80 & \textbf{16.79} & 26.89 & \textbf{32.38} & 19.30 & \textbf{23.32} \\
 &  & SeqKD & 25.74 & \textbf{27.44} & 36.07 & \textbf{42.36} & 11.18 & \textbf{13.31} & 26.39 & \textbf{33.63} & 15.28 & \textbf{16.44} & 26.37 & \textbf{32.11} & 20.17 & \textbf{23.65} \\
 &  & ImitKD & 23.35 & \textbf{25.45} & 35.14 & \textbf{38.06} & 10.05 & \textbf{12.09} & 25.49 & \textbf{30.50} & 14.89 & \textbf{16.13} & 27.15 & \textbf{28.20} & 17.36 & \textbf{21.13} \\
 &  & MiniLLM & 26.20 & \textbf{26.31} & 39.26 & \textbf{43.55} & 13.12 & \textbf{14.25} & 29.70 & \textbf{31.85} & 17.05 & 17.03 & 32.90 & \textbf{34.10} & 24.01 & \textbf{24.09} \\
 &  & GKD & 26.35 & \textbf{26.52} & 38.88 & \textbf{40.79} & 12.20 & \textbf{13.18} & 28.00 & \textbf{32.83} & 16.32 & \textbf{17.42} & 28.72 & \textbf{32.64} & 22.79 & \textbf{25.15} \\
 &  & DISTILLM & 27.47 & \textbf{27.67} & 40.83 & \textbf{41.67} & 12.81 & \textbf{13.86} & 28.35 & \textbf{31.75} & 16.46 & \textbf{16.87} & 31.33 & 31.07 & 24.25 & \textbf{26.23} \\
\cmidrule{2-17}
 & \multirow{6}{*}{0.7B} & KD & 26.61 & \textbf{26.69} & 39.80 & \textbf{41.09} & 12.65 & \textbf{12.80} & 29.25 & \textbf{29.80} & 15.67 & \textbf{16.63} & 31.85 & \textbf{32.81} & 23.04 & \textbf{23.47} \\
 &  & SeqKD & 26.50 & \textbf{26.62} & 40.10 & \textbf{41.52} & 12.54 & \textbf{12.61} & 29.79 & 28.35 & 17.10 & 15.83 & 31.85 & 29.72 & 23.71 & 22.21 \\
 &  & ImitKD & 25.19 & \textbf{26.57} & 39.20 & \textbf{40.79} & 12.01 & \textbf{13.48} & 28.00 & \textbf{29.70} & 15.39 & \textbf{16.73} & 31.07 & \textbf{31.33} & 21.84 & \textbf{23.20} \\
 &  & MiniLLM & 27.15 & \textbf{27.88} & 41.50 & \textbf{42.80} & 12.35 & \textbf{12.67} & 29.25 & \textbf{30.95} & 17.47 & \textbf{18.01} & 33.68 & \textbf{35.20} & 24.21 & \textbf{24.77} \\
 &  & GKD & 26.96 & \textbf{27.04} & 39.10 & \textbf{40.47} & 13.23 & \textbf{13.32} & 32.47 & \textbf{32.49} & 16.83 & \textbf{17.67} & 29.24 & \textbf{33.94} & 26.58 & 26.57 \\
 &  & DISTILLM & 28.02 & \textbf{28.12} & 40.50 & \textbf{42.18} & 14.08 & \textbf{14.24} & 31.40 & \textbf{31.57} & 16.90 & \textbf{17.09} & 34.46 & \textbf{35.25} & 26.91 & \textbf{27.48} \\
\midrule
\multirow{19}{*}{OPT} & 13B & Teacher & 28.00 & \multicolumn{1}{c}{-} & 59.85 & \multicolumn{1}{c}{-} & 17.70 & \multicolumn{1}{c}{-} & 49.69 & \multicolumn{1}{c}{-} & 16.99 & \multicolumn{1}{c}{-} & 46.42 & \multicolumn{1}{c}{-} & 30.69 & \multicolumn{1}{c}{-} \\
\cmidrule{2-17}
 & \multirow{6}{*}{1.3B} & KD & 26.36 & \textbf{26.47} & 45.42 & \textbf{39.70} & 14.35 & 14.21 & 32.83 & \textbf{33.11} & 16.51 & \textbf{16.78} & 31.33 & \textbf{32.65} & 24.89 & \textbf{25.13} \\
 &  & SeqKD & 26.58 & \textbf{27.40} & 45.57 & \textbf{48.16} & 13.79 & \textbf{14.39} & 33.01 & \textbf{33.54} & 16.14 & \textbf{16.49} & 33.42 & \textbf{34.16} & 24.73 & \textbf{24.76} \\
 &  & ImitKD & 25.12 & \textbf{26.83} & 42.58 & \textbf{44.66} & 13.16 & \textbf{13.34} & 31.31 & \textbf{34.53} & 15.05 & \textbf{16.24} & 30.55 & \textbf{33.16} & 21.91 & \textbf{23.96} \\
 &  & MiniLLM & 26.63 & \textbf{27.64} & 46.50 & \textbf{48.95} & 14.93 & \textbf{15.18} & 36.40 & \textbf{37.80} & 17.61 & \textbf{17.90} & 38.38 & \textbf{39.50} & 28.76 & \textbf{29.61} \\
 &  & GKD & 26.00 & \textbf{26.38} & 43.67 & \textbf{45.42} & 14.83 & 13.79 & 33.90 & \textbf{35.96} & 17.30 & \textbf{17.36} & 33.42 & \textbf{36.29} & 26.22 & \textbf{27.00} \\
 &  & DISTILLM & 27.69 & \textbf{27.77} & 42.00 & \textbf{45.24} & 13.27 & \textbf{14.31} & 33.18 & \textbf{36.23} & 17.86 & \textbf{18.05} & 35.77 & \textbf{40.99} & 26.68 & 25.68 \\
\cmidrule{2-17}
 & \multirow{6}{*}{2.7B} & KD & 26.35 & \textbf{27.82} & 46.88 & \textbf{51.80} & 13.76 & \textbf{14.13} & 35.06 & \textbf{37.75} & 16.57 & \textbf{16.57} & 36.55 & \textbf{37.03} & 23.68 & \textbf{25.92} \\
 &  & SeqKD & 26.82 & \textbf{28.02} & 47.54 & \textbf{49.40} & 14.18 & 14.08 & 35.42 & 34.35 & 16.70 & \textbf{16.97} & 35.51 & \textbf{35.52} & 23.89 & \textbf{25.55} \\
 &  & ImitKD & 25.23 & \textbf{28.65} & 42.73 & \textbf{44.36} & 13.04 & \textbf{14.80} & 32.92 & \textbf{35.96} & 15.86 & \textbf{16.69} & 33.16 & \textbf{37.34} & 21.14 & \textbf{26.50} \\
 &  & MiniLLM & 27.16 & \textbf{28.11} & 50.10 & \textbf{52.30} & 17.34 & \textbf{18.75} & 43.29 & \textbf{44.55} & 19.25 & \textbf{20.27} & 43.60 & \textbf{44.10} & 30.08 & \textbf{30.24} \\
 &  & GKD & 26.39 & \textbf{27.56} & 47.98 & \textbf{51.80} & 15.77 & \textbf{15.83} & 38.46 & \textbf{38.92} & 17.43 & \textbf{17.96} & 40.21 & \textbf{42.30} & 26.45 & \textbf{26.66} \\
 &  & DISTILLM & 28.03 & \textbf{28.72} & 48.74 & \textbf{52.62} & 15.12 & \textbf{15.95} & 36.67 & 36.31 & 17.91 & \textbf{17.91} & 38.12 & 37.34 & 26.30 & \textbf{26.70} \\
\cmidrule{2-17}
 & \multirow{6}{*}{6.7B} & KD & 27.76 & \textbf{28.45} & 55.81 & \textbf{56.50} & 17.18 & \textbf{17.65} & 46.78 & \textbf{47.90} & 17.53 & \textbf{17.88} & 45.17 & \textbf{46.25} & 31.00 & \textbf{32.12} \\
 &  & SeqKD & 27.78 & \textbf{28.90} & 57.42 & \textbf{58.10} & 16.52 & \textbf{17.25} & 47.58 & \textbf{48.35} & 17.51 & \textbf{17.95} & 42.04 & \textbf{43.80} & 30.42 & \textbf{31.85} \\
 &  & ImitKD & 25.48 & \textbf{27.85} & 44.48 & \textbf{45.90} & 14.16 & \textbf{15.20} & 38.01 & \textbf{39.40} & 17.28 & \textbf{17.80} & 35.77 & \textbf{37.20} & 23.81 & \textbf{27.50} \\
 &  & MiniLLM & 28.74 & \textbf{29.67} & 58.90 & \textbf{60.50} & 18.69 & \textbf{19.98} & 48.12 & \textbf{49.80} & 18.85 & \textbf{19.09} & 47.78 & \textbf{49.10} & 31.68 & \textbf{32.31} \\
 &  & GKD & 28.72 & \textbf{29.15} & 60.15 & \textbf{61.20} & 18.96 & \textbf{19.45} & 47.67 & \textbf{48.55} & 18.79 & \textbf{19.25} & 42.04 & \textbf{43.90} & 31.91 & \textbf{32.88} \\
 &  & DISTILLM & 28.84 & \textbf{29.68} & 58.77 & \textbf{59.80} & 18.20 & \textbf{19.10} & 48.93 & \textbf{49.95} & 18.44 & \textbf{19.05} & 45.69 & \textbf{46.50} & 32.84 & \textbf{33.95} \\
\midrule
\multicolumn{1}{l}{} & 13B & Teacher & 30.09 & \multicolumn{1}{c}{\textbf{-}} & 71.35 & \multicolumn{1}{c}{-} & 22.85 & \multicolumn{1}{c}{\textbf{-}} & 68.81 & \multicolumn{1}{c}{-} & 18.95 & \multicolumn{1}{c}{\textbf{-}} & 57.32 & \multicolumn{1}{c}{-} & 35.83 & \multicolumn{1}{c}{\textbf{-}} \\
\cmidrule{2-17}
\multirow{6}{*}{LLAMA} & \multirow{6}{*}{7B} & KD & 27.16 & \textbf{27.95} & 65.80 & \textbf{66.90} & 20.33 & \textbf{20.88} & 61.54 & \textbf{62.70} & 18.48 & \textbf{18.90} & 57.44 & \textbf{58.60} & 33.78 & \textbf{34.50} \\
 &  & SeqKD & 26.06 & \textbf{27.20} & 69.30 & \textbf{70.15} & 22.37 & \textbf{22.95} & 66.64 & \textbf{67.80} & 18.76 & \textbf{19.05} & 54.31 & \textbf{55.90} & 34.46 & \textbf{35.10} \\
 &  & ImitKD & 26.26 & \textbf{27.80} & 50.60 & \textbf{52.40} & 17.47 & \textbf{18.55} & 47.14 & \textbf{48.30} & 17.31 & \textbf{17.95} & 41.78 & \textbf{43.20} & 24.76 & \textbf{26.80} \\
 &  & MiniLLM & 28.48 & \textbf{29.00} & 68.43 & \textbf{69.80} & 21.25 & \textbf{22.26} & 65.38 & \textbf{66.90} & 19.74 & 19.62 & 50.13 & \textbf{51.80} & 37.02 & \textbf{37.18} \\
 &  & GKD & 27.93 & \textbf{28.55} & 68.68 & \textbf{69.50} & 23.51 & \textbf{23.90} & 66.19 & \textbf{67.45} & 18.63 & \textbf{19.15} & 53.00 & \textbf{54.20} & 34.44 & \textbf{35.20} \\
 &  & DISTILLM & 29.36 & \textbf{29.90} & 67.77 & \textbf{68.90} & 22.65 & \textbf{23.10} & 65.65 & \textbf{66.80} & 19.81 & \textbf{20.20} & 56.92 & \textbf{58.10} & 35.36 & \textbf{36.05} \\
\bottomrule
\end{tabular}
\end{table}

\clearpage
\cref{tab:hop_sparse_all} in shows that student models trained with CIP exhibit higher Gini coefficients and lower entropy. This confirms that the CIP loss can truly increase the sparsity of complex interactions.

\begin{table*}[!htb]
\small
\centering
\renewcommand{\arraystretch}{1.15} 
\setlength{\tabcolsep}{2.5pt} 
\caption{The comparison of the sparsity of complex interactions between different KD methods without and with the CIP loss function.}
\label{tab:hop_sparse}
\begin{tabular}{ll cccccc cccccc}
\toprule
\multirow{2}{*}{Model} & \multirow{2}{*}{Type} & \multicolumn{2}{c}{KD} & \multicolumn{2}{c}{SeqKD} & \multicolumn{2}{c}{ImitKD} & \multicolumn{2}{c}{MiniLLM} & \multicolumn{2}{c}{GKD} & \multicolumn{2}{c}{DISTILLM} \\
\cmidrule(lr){3-4} \cmidrule(lr){5-6} \cmidrule(lr){7-8} \cmidrule(lr){9-10} \cmidrule(lr){11-12} \cmidrule(lr){13-14}
 & & Gini & Entropy & Gini & Entropy & Gini & Entropy & Gini & Entropy & Gini & Entropy & Gini & Entropy \\
\midrule

\multirow{2}{*}{GPT-2-0.1B} 
 & w/o CIP & 0.5225 & 3.7524 & 0.5676 & 3.3985 & 0.5210 & 3.7071 & 0.5452 & 3.6237 & 0.5392 & 3.6922 & 0.5498 & 3.4772 \\
 & w/ CIP & \textbf{0.5653} & \textbf{3.3514} & \textbf{0.5725} & \textbf{3.3623} & \textbf{0.5543} & \textbf{3.4502} & \textbf{0.5684} & \textbf{3.4105} & \textbf{0.5572} & \textbf{3.3268} & \textbf{0.6066} & \textbf{2.9736} \\
\midrule

\multirow{2}{*}{GPT-2-0.3B} 
 & w/o CIP & 0.5521 & 3.4939 & 0.5767 & 3.4173 & 0.5409 & 3.5490 & 0.5801 & 3.4254 & 0.5658 & 3.4032 & 0.6132 & 3.3348 \\
 & w/ CIP & \textbf{0.6219} & \textbf{2.9641} & \textbf{0.6050} & \textbf{3.2265} & \textbf{0.6046} & \textbf{3.2330} & \textbf{0.6012} & \textbf{3.2840} & \textbf{0.5940} & \textbf{3.3387} & \textbf{0.6227} & \textbf{3.0785} \\
\midrule

\multirow{2}{*}{GPT-2-0.7B} 
 & w/o CIP & 0.5680 & 3.4362 & 0.5647 & 3.6588 & 0.4773 & 3.9372 & 0.5687 & 3.5228 & 0.5102 & 3.5810 & 0.5764 & 3.3691 \\
 & w/ CIP & \textbf{0.5681} & \textbf{3.4393} & \textbf{0.6105} & \textbf{3.0819} & \textbf{0.5856} & \textbf{3.3628} & \textbf{0.5944} & \textbf{3.2455} & \textbf{0.5729} & \textbf{3.3387} & \textbf{0.5989} & \textbf{3.2258} \\
\midrule

\multirow{2}{*}{OPT-1.3B} 
 & w/o CIP & 0.5143 & 3.5611 & 0.5087 & 3.5761 & 0.5041 & 3.6119 & 0.5208 & 3.4801 & 0.4804 & 3.5777 & 0.5392 & 3.4512 \\
 & w/ CIP & \textbf{0.5420} & \textbf{3.2541} & \textbf{0.5635} & \textbf{3.1824} & \textbf{0.5341} & \textbf{3.2410} & \textbf{0.5512} & \textbf{3.2045} & \textbf{0.5053} & \textbf{3.4663} & \textbf{0.5398} & \textbf{3.4506} \\
\midrule

\multirow{2}{*}{OPT-2.7B} 
 & w/o CIP & 0.5067 & 3.6248 & 0.4989 & 3.8021 & 0.5075 & 3.7329 & 0.5531 & 3.3040 & 0.4664 & 3.8563 & 0.5642 & 3.2083 \\
 & w/ CIP & \textbf{0.5876} & \textbf{3.1515} & \textbf{0.5876} & \textbf{3.1722} & \textbf{0.5969} & \textbf{3.1481} & \textbf{0.5788} & \textbf{3.0560} & \textbf{0.6033} & \textbf{2.8287} & \textbf{0.6063} & \textbf{3.0535} \\
\midrule

\multirow{2}{*}{OPT-6.7B} 
 & w/o CIP & 0.5336 & 3.6044 & 0.5311 & 3.6010 & 0.4676 & 3.8826 & 0.5620 & 3.3126 & 0.5793 & 3.2883 & 0.5765 & 3.5440 \\
 & w/ CIP & \textbf{0.5612} & \textbf{3.3420} & \textbf{0.5680} & \textbf{3.3150} & \textbf{0.5125} & \textbf{3.5540} & \textbf{0.5890} & \textbf{3.0822} & \textbf{0.6045} & \textbf{2.9980} & \textbf{0.6110} & \textbf{3.2105} \\
\midrule

\multirow{2}{*}{LLaMA-7B} 
 & w/o CIP & 0.4491 & 4.1350 & 0.4495 & 4.2051 & 0.4859 & 4.0419 & 0.5011 & 3.8805 & 0.5253 & 3.8678 & 0.5575 & 3.7530 \\
 & w/ CIP & \textbf{0.4950} & \textbf{3.7850} & \textbf{0.4982} & \textbf{3.8420} & \textbf{0.5320} & \textbf{3.6550} & \textbf{0.5365} & \textbf{3.5890} & \textbf{0.5610} & \textbf{3.5220} & \textbf{0.5885} & \textbf{3.4150} \\
\bottomrule
\end{tabular}
\label{tab:hop_sparse_all}
\end{table*}

\clearpage
\subsection{Robustness Check for Different $\beta$}
\label{app:more_results_3n}

In Section~\ref{section:4.3}, we utilized $\beta = \lceil n/3 \rceil$ for interaction partitioning by following \citet{lqcvpr}. To assess the impact of this threshold, we conduct our experiments using $\beta = \lceil n/5 \rceil$ and $\beta = \lceil n/2 \rceil$. As shown below, the experimental results align perfectly with those in Sections~\ref{section:4.3} and \ref{section:4.4}, confirming the robustness of our conclusions regarding the choice of $\beta$.

Here are the results for {\small$\beta = \lceil n/5 \rceil$}

\cref{figure:high_low_sparsity_all_0.2} plots the change of Gini coefficients between distilled and base models for simple and complex interactions. Results show that the sparsity of complex interactions exhibits an obvious increase after distillation.

\begin{figure*}[!htb]
\centering
\includegraphics[width=1\textwidth]{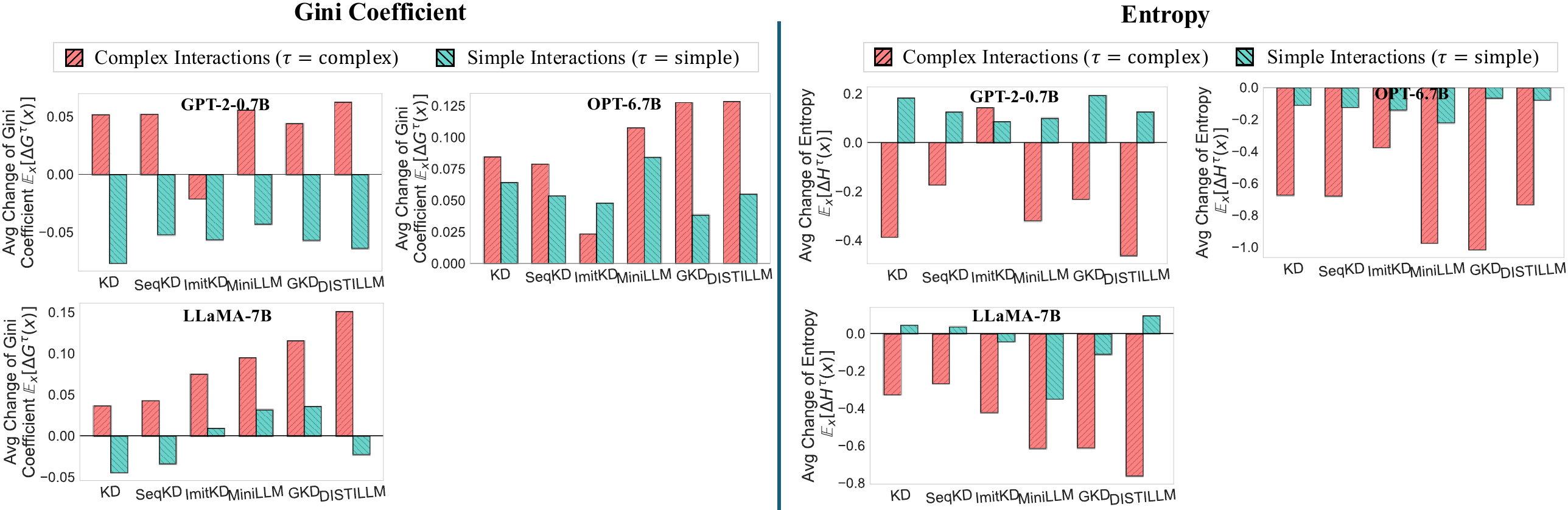}
\caption{Comparison of sparsity changes in simple versus complex interactions between non-distilled and distilled models. Here {\small$\beta = \lceil n/5 \rceil$}.}
\label{figure:high_low_sparsity_all_0.2}
\end{figure*}

\cref{figure:high_low_overlap_all_0.2} shows the overlap rate of salient simple interactions consistently surpasses that of complex interactions. This indicates that the student model learn more simple salient interactions than complex salient interactions from the teacher model.

\begin{figure*}[!htb]
\centering
\includegraphics[width=0.95\textwidth]{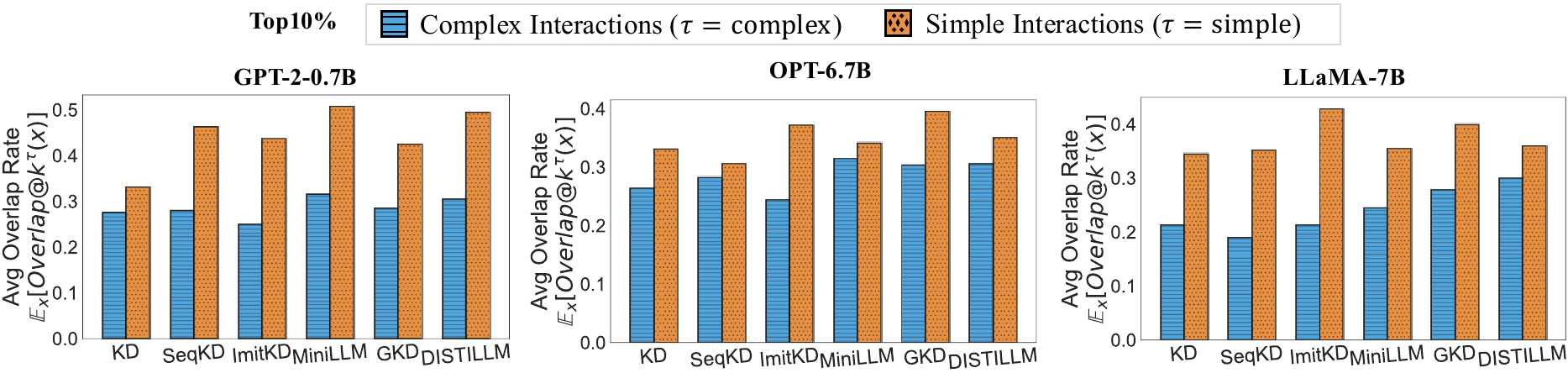}
\caption{Comparison of overlap rate between simple interactions and complex interactions after distillation. Here {\small$\beta = \lceil n/5 \rceil$}.}
\label{figure:high_low_overlap_all_0.2}
\end{figure*}

\clearpage
\cref{figure:correlation_high_order_all_0.2} shows that the sparsity of complex interactions is positively correlated with model performance. Student models that exhibit higher Gini coefficients and lower entropy (indicating higher sparsity) in their complex interactions generally achieve superior ROUGE-L scores.

\begin{figure*}[!htb]
\centering
\includegraphics[width=1\textwidth]{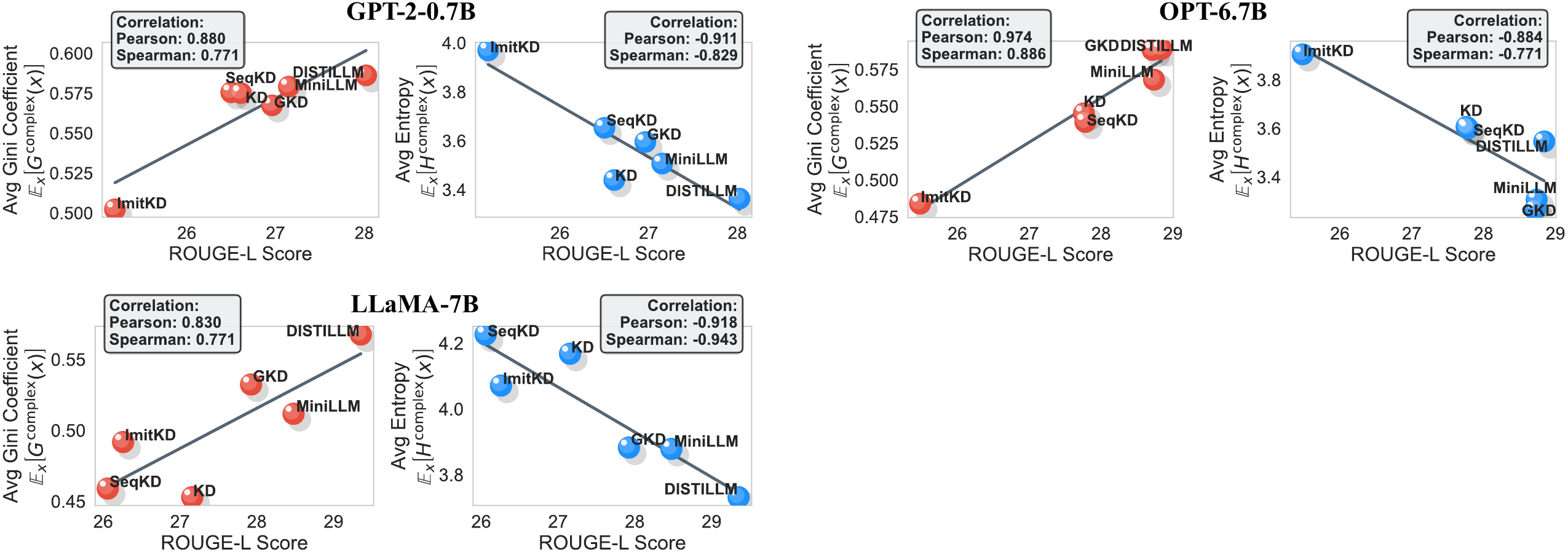}
\caption{Correlation between the sparsity of complex interactions and model performance. Here {\small$\beta = \lceil n/5 \rceil$}.}
\label{figure:correlation_high_order_all_0.2}
\end{figure*}

As shown in \cref{figure:overlap_all_0.2}, the results reveal a positive correlation between model performance and the overlap rate of complex interactions. This indicates that student models
with higher performance learn more complex interactions from the teacher model compared to student models with lower performance.

\begin{figure*}[!htb]
\centering
\includegraphics[width=1\textwidth]{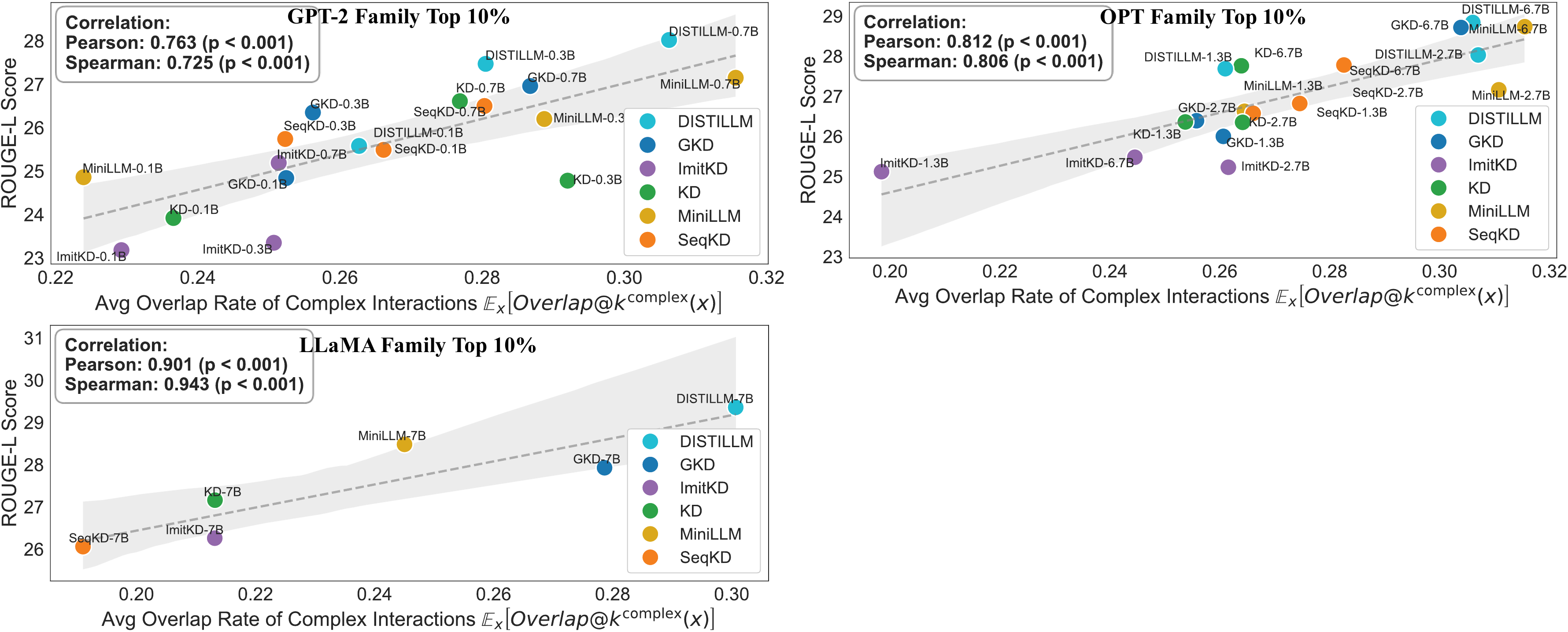}
\caption{Correlation between the student-teacher overlap rate of complex interactions and model performance. Here {\small$\beta = \lceil n/5 \rceil$}.}
\label{figure:overlap_all_0.2}
\end{figure*}

\clearpage
Here are the results for {\small$\beta = \lceil n/2 \rceil$}

\cref{figure:high_low_sparsity_all_0.5} plots the change of Gini coefficients between distilled and base models for simple and complex interactions. Results show that the sparsity of complex interactions exhibits an obvious increase after distillation.

\begin{figure*}[!htb]
\centering
\includegraphics[width=1\textwidth]{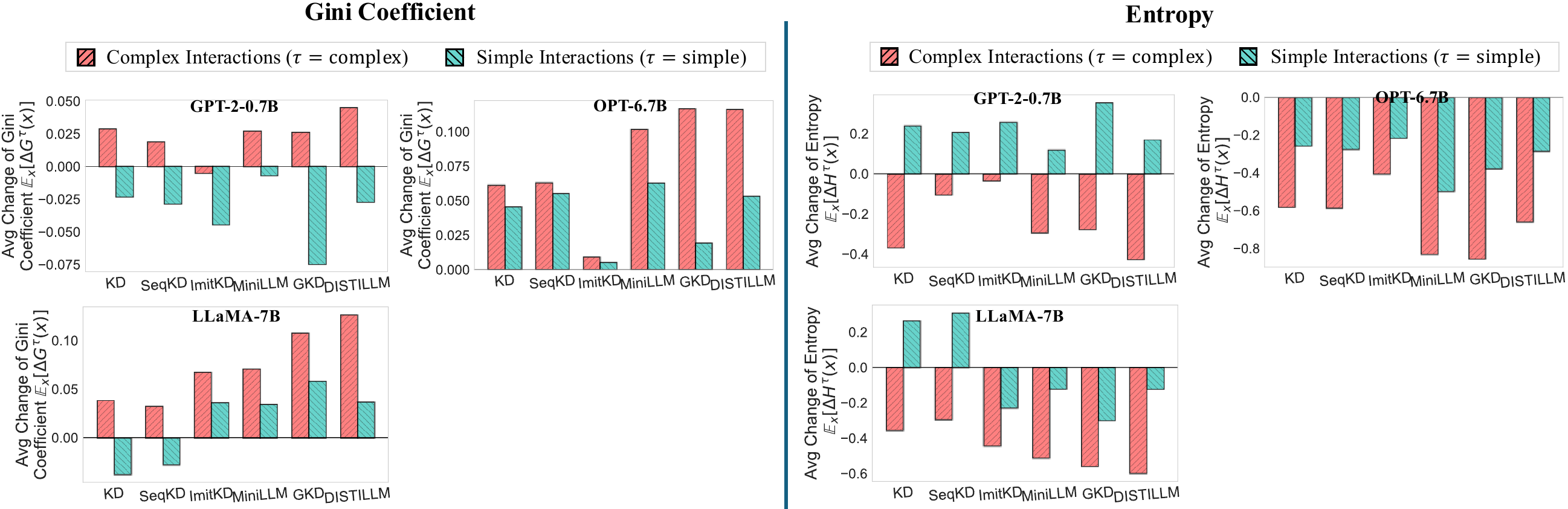}
\caption{Comparison of sparsity changes in simple versus complex interactions between non-distilled and distilled models. Here {\small$\beta = \lceil n/2 \rceil$}.}
\label{figure:high_low_sparsity_all_0.5}
\end{figure*}

\cref{figure:high_low_overlap_all_0.5} shows the overlap rate of salient simple interactions consistently surpasses that of complex interactions. This indicates that the student model learn more simple salient interactions than complex salient interactions from the teacher model.

\begin{figure*}[!htb]
\centering
\includegraphics[width=0.95\textwidth]{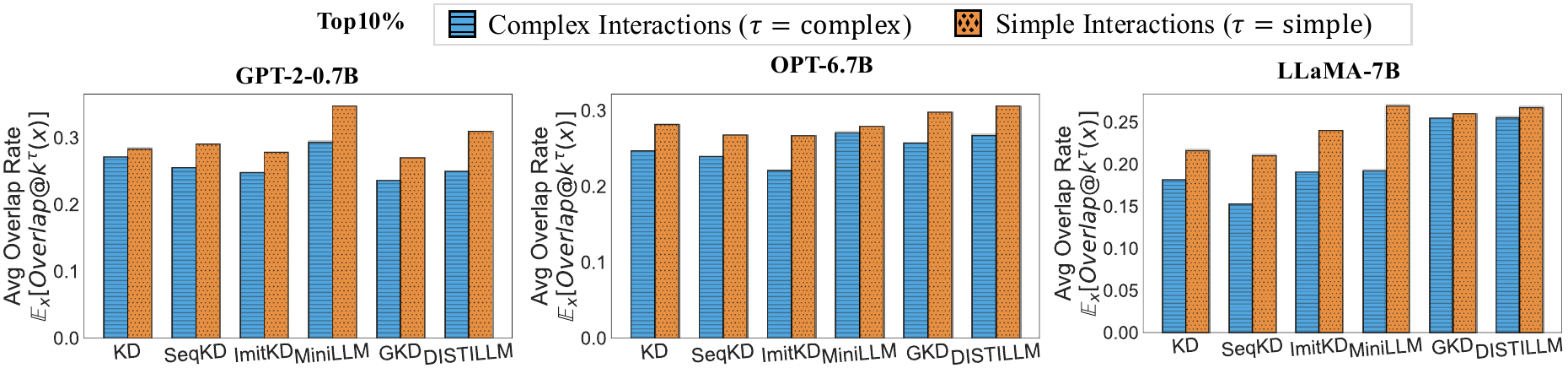}
\caption{Comparison of overlap rate between simple interactions and complex interactions after distillation. Here {\small$\beta = \lceil n/2 \rceil$}.}
\label{figure:high_low_overlap_all_0.5}
\end{figure*}

\clearpage
\cref{figure:correlation_high_order_all_0.5} shows that the sparsity of complex interactions is positively correlated with model performance. Student models that exhibit higher Gini coefficients and lower entropy (indicating higher sparsity) in their complex interactions generally achieve superior ROUGE-L scores.

\begin{figure*}[!htb]
\centering
\includegraphics[width=1\textwidth]{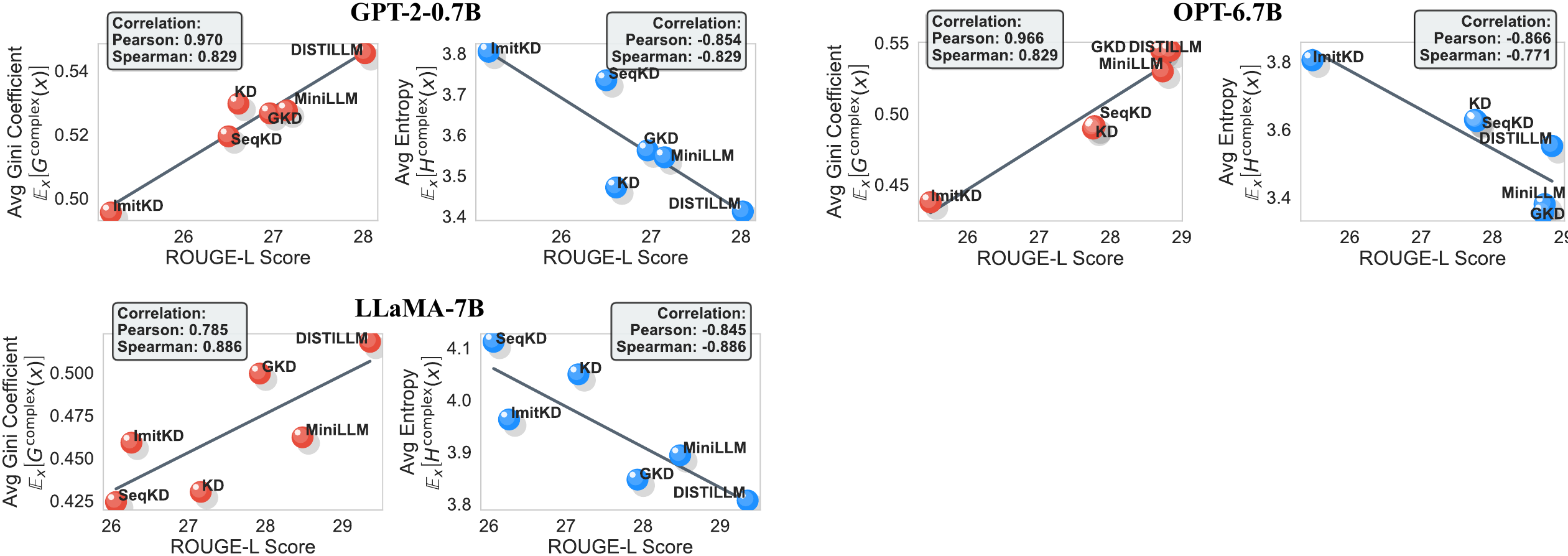}
\caption{Correlation between the sparsity of complex interactions and model performance. Here {\small$\beta = \lceil n/2 \rceil$}.}
\label{figure:correlation_high_order_all_0.5}
\end{figure*}

As shown in \cref{figure:overlap_all_0.5}, the results reveal a positive correlation between model performance and the overlap rate of complex interactions. This indicates that student models
with higher performance learn more complex interactions from the teacher model compared to student models with lower performance.

\begin{figure*}[!htb]
\centering
\includegraphics[width=1\textwidth]{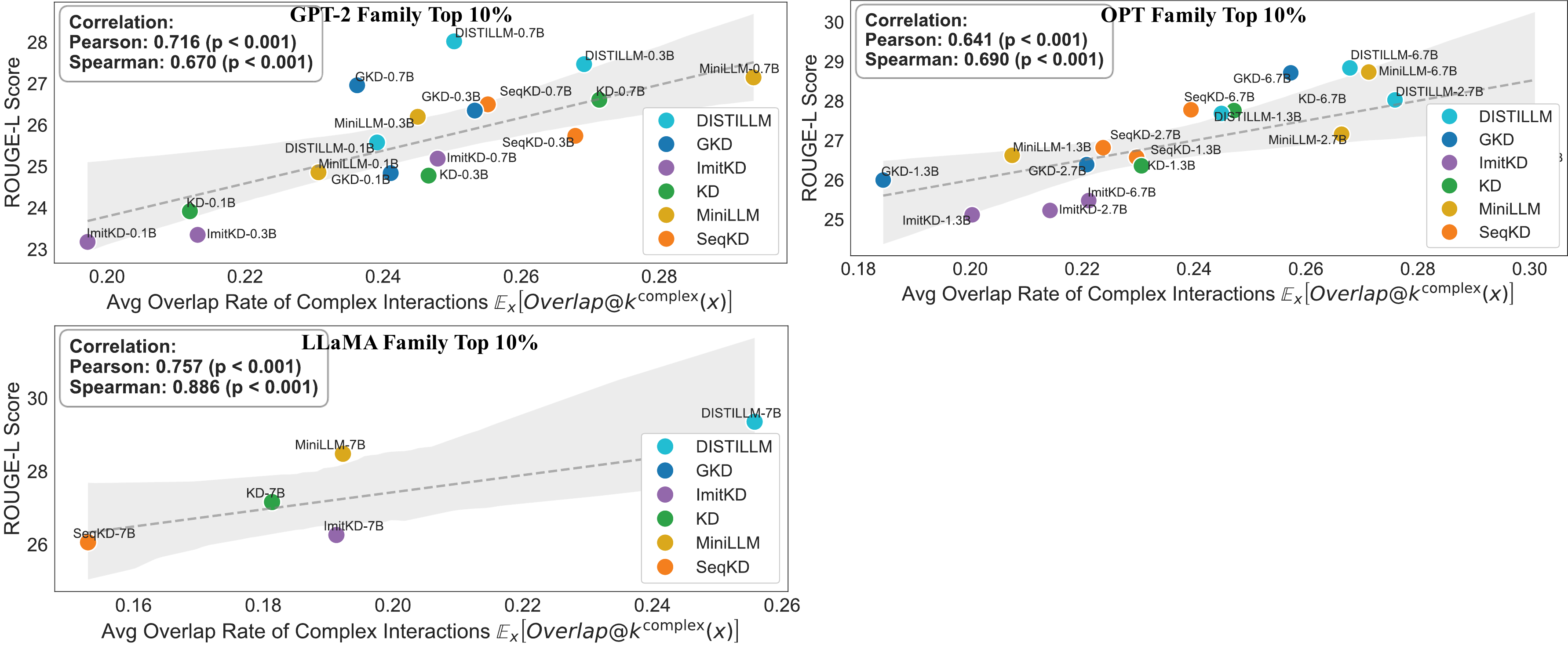}
\caption{Correlation between the student-teacher overlap rate of complex interactions and model performance. Here {\small$\beta = \lceil n/2 \rceil$}.}
\label{figure:overlap_all_0.5}
\end{figure*}

\clearpage

\section{Case Study}\label{app:case_study}
We provide some cases generated by the models distilled by different methods based on the OPT-1.3B model in \cref{tab:cases}. We find that the KD methods with CIP generate more detailed and accurate responses compared with the baselines.

\renewcommand{\arraystretch}{1.2}

\begin{small}
\begin{xltabular}{\linewidth}{l l X}
    \caption{Instruction-following cases from the \texttt{databricks-dolly-15k} dataset. \label{tab:cases}} \\

    \toprule
    \textbf{Model} & \textbf{Type} & \textbf{Content} \\
    \midrule
    \endfirsthead

    \multicolumn{3}{l}{\small \textit{Continued from previous page}} \\
    \toprule
    \textbf{Model} & \textbf{Type} & \textbf{Content} \\
    \midrule
    \endhead

    \midrule
    \multicolumn{3}{r}{\textit{Continued on next page...}} \\
    \midrule
    \endfoot

    \bottomrule
    \endlastfoot

    \multicolumn{3}{c}{\textbf{Case \#1}} \\ 
    \midrule

    \textbf{Instruction} & & Name some famous rock bands from the 1960s \\ 
    \textbf{Input}        & & None \\ 
    \textbf{Ground-truth}& & The 1960s had a number of famous rock bands including The Beatles, The Beach Boys, The Doors, The Rolling Stones, The Who, Led Zeppelin, The Grateful Dead, Pink Floyd, and many more. \\ 
    \midrule

    \multirow{2}{*}{\textbf{KD}} 
     & w/o CIP & In the 1960s, there were a lot of famous rock bands, among them, The Rolling Stones, The Grateful Dead, The Who, KISS, Pink Floyd, Queen, The Doors \\
     & w/ CIP  & Many famous rock bands came from the late 1960s and early 1970s, including Led Zeppelin, The Who, Pink Floyd, The Doors and Jefferson Airplane \\
    \addlinespace[0.5em]

    \multirow{2}{*}{\textbf{SeqKD}} 
     & w/o CIP & Famous rock bands from the 1960's include the Rolling Stones, the Beatles, and the Jimi Hendrix \\
     & w/ CIP  & Famous rock bands from the 1960's are the Rolling Stones, The Beatles, and Led Zeppelin \\
    \addlinespace[0.5em]

    \multirow{2}{*}{\textbf{ImitKD}} 
     & w/o CIP & The Beatles, The Rolling Stones, The Jefferson Airplane, The Doors, Bob Dylan, Jagger, Oasis, Styx, You Know My Name \\
     & w/ CIP  & Led Zeppelin, The Rolling Stones, The Who, Pink Floyd \\
    \addlinespace[0.5em]

    \multirow{2}{*}{\textbf{MiniLLM}} 
     & w/o CIP & Oasis - Wonderwall, The Beatles - Rocky Raccoon, The Rolling Stones -Its all over now, Eagles - Take it easy, The Who - Who, Pink Floyd - Dark Side of the Moon, Led Zeppelin - What is Dead May Never Die, Queen - Another brick in the wall: Pink 3 \\
     & w/ CIP  & Many believe that the 1960s were the golden age of rock bands, and there are hundreds of beautiful songs from this time period. Some of the most famous bands include The Beatles, The Rolling Stones, Led Zeppelin, The Who, Pink Floyd and R.E.M. \\
    \addlinespace[0.5em]

    \multirow{2}{*}{\textbf{GKD}} 
     & w/o CIP & There are a lot of famous rock bands from the 60s, some of which are very well known today like The Beatles, The Rolling Stones, John Lennon and The Beatles. Other bands that are very well known from the 1960s are The Beatles' studio band, The Who, The Rolling Stones, The Grateful Dead and The Doors \\
     & w/ CIP  & The Beatles, The Stones, The Rolling Stones, Led Zeppelin, The Moody Blues \\
    \addlinespace[0.5em]

    \multirow{2}{*}{\textbf{DISTILLM}} 
     & w/o CIP & The Rolling Stones, The Beatles, The Doors, Jefferson Airplane, The Rolling Stones, The Beatles \\
     & w/ CIP  & Some famous rock bands from the 1970s are Led Zeppelin, The Doors, Pink Floyd, The Rolling Stones, Eagles, The Yardbirds \\

    \midrule
    \multicolumn{3}{c}{\textbf{Case \#2}} \\ 
    \midrule

    \textbf{Instruction} & & Who won the 63rd staging of the Limerick Senior Hurling Championship? \\ 
    \textbf{Input}        & & On 29 September 1957, Claughaun won the championship after a 7-07 to 3-02 defeat of St. Patrick's in the final. \\ 
    \textbf{Ground-truth}& & The 1957 Limerick Senior Hurling Championship was the 63rd staging of the Limerick Senior Hurling Championship since its establishment by the Limerick County Board in 1887. Cappamore were the defending champions, however, they were defeated by St. Patrick's. On 29 September 1957, Claughaun won the championship after a 7-07 to 3-02 defeat of St. Patrick's in the final. It was their sixth championship title overall and their first championship title since 1926. \\ 
    \midrule

    \multirow{2}{*}{\textbf{KD}} 
     & w/o CIP & Claughaun won the championship on 29 September 1957 after a 7-07 TO 3-02 defeat of in the final. \\
     & w/ CIP  & Claughaun won the 57th edition of the Limerick Senior hurling championship after defeating St. Patrick's by 7-01 to 3-02 in the final. \\
    \addlinespace[0.5em]

    \multirow{2}{*}{\textbf{SeqKD}} 
     & w/o CIP & Claughaun won the championship by 7-07 to 3-02. \\
     & w/ CIP  & Claughaun won the championship following a 7-07 to 3-02 defeat of St Patrick's in the final on Sunday 29 September 1957. \\
    \addlinespace[0.5em]

    \multirow{2}{*}{\textbf{ImitKD}} 
     & w/o CIP & Claughaun won the 63rd staging on September 29th, 1957. \\
     & w/ CIP  & Claughaun won the championship. \\
    \addlinespace[0.5em]

    \multirow{2}{*}{\textbf{MiniLLM}} 
     & w/o CIP & Claughaun won the 63rd staging an went on to win the next four senior hurling championships. \\
     & w/ CIP  & Claughaun won the Limerick Senior Hurting Championship on 29 September 1957, after a 7-07-3 victory over St. Patrick's. \\
    \addlinespace[0.5em]

    \multirow{2}{*}{\textbf{GKD}} 
     & w/o CIP & Claughaun \\
     & w/ CIP  & Claughaun won the championship on 29 September 1957 after a 7-07 to 3-02 defeat of St. Patrick's in the final. \\
    \addlinespace[0.5em]

    \multirow{2}{*}{\textbf{DISTILLM}} 
     & w/o CIP & Claughaun won the 1957 Limerick Senior Football Championship after defeating St. Patrick's by 7-07 to 3 -02 in the final. \\
     & w/ CIP  & Claughaun won the championship After a 7-07 to 3-02 defeat of St Patrick's in the final \\

\end{xltabular}
\end{small}


\end{document}